\documentclass{article} 

\PassOptionsToPackage{numbers,sort&compress}{natbib}

\usepackage[preprint]{neurips_2026}

\usepackage[utf8]{inputenc}
\usepackage[T1]{fontenc}

\usepackage{microtype}
\usepackage{graphicx}
\usepackage{mathtools}
\usepackage{subcaption}
\usepackage{booktabs}
\usepackage{placeins} 
\graphicspath{{./}{figs/}}
\DeclareGraphicsExtensions{.pdf,.png,.jpg,.jpeg}

\usepackage[hidelinks]{hyperref}

\usepackage{amsmath,amssymb,amsfonts}
\usepackage{amsthm}

\usepackage{xcolor}

\newif\ifshowAYedits
\showAYeditstrue

\newcommand{\E}{\mathbb{E}}
\newcommand{\R}{\mathbb{R}}
\newcommand{\Tr}{\ensuremath{\mathrm{Tr}}}
\newcommand{\Id}{\ensuremath{\mathrm{I}}}

\newcommand{\AIRM}{\ensuremath{d_{\mathrm{AIRM}}}}

\newcommand{\Spp}{\mathbb{S}_{++}}

\newcommand{\sras}{\ensuremath{\mathrm{S\text{-}RAS}}}
\newcommand{\epsreg}{\ensuremath{\varepsilon_{\mathrm{reg}}}}
\newcommand{\dinf}{\ensuremath{d_{\infty}}}

\newcommand{\op}{\ensuremath{\mathrm{op}}}

\theoremstyle{plain}
\newtheorem{theorem}{Theorem}
\newtheorem{proposition}{Proposition}
\newtheorem{lemma}{Lemma}
\theoremstyle{definition}
\newtheorem{definition}{Definition}
\theoremstyle{remark}
\newtheorem*{remark}{Remark}

\title{
Beyond Activation Alignment:
The Geometry of Neural Sensitivity
}

\author{%
  \textbf{Amirhossein Yavari$^1$, Farnaz Zamani Esfahlani$^{1,2}$\thanks{Corresponding author.}} \\
  $^1$ Stephenson School of Biomedical Engineering, University of Oklahoma \\
  $^2$ Data Science and Analytics Institute, University of Oklahoma \\
  Norman, OK, USA \\
\texttt{amirhossein.yavari@ou.edu, fzamani@ou.edu}
}

\begin{document}
\maketitle

\begin{abstract}
Activation-alignment measures such as Representational Similarity Analysis (RSA), Canonical Correlation Analysis (CCA), and Centered Kernel Alignment (CKA) are widely used to compare biological and artificial neural representations. Recent theoretical work interprets many of these methods as assessing agreement between optimal linear readouts over broad families of global tasks. However, agreement at the level of global readouts does not determine how a system uses local stimulus evidence. Specifically, representations may align in activation space yet differ in their sensitivity to small perturbations. To address this challenge, we introduce a complementary framework based on local decodable information, which focuses on a representation's ability, under noise, to discriminate small perturbations within a specified stimulus-coordinate subspace. Building on Fisher information and local representation geometry, we summarize each representation using the expected projected pullback/Fisher metric over that subspace. This formulation induces a second-moment family of local discrimination tasks, for which the resulting operator provides a minimal, complete dataset-level summary of expected discriminability. We compare these regularized signatures using a log-spectral distance on the manifold of symmetric positive definite (SPD) matrices, yielding the Spectral Riemannian Alignment Score (S-RAS) and a uniform multiplicative certificate over the corresponding family of lifted task values. Empirically, this framework enables the recovery of corresponding layers across independently trained artificial neural networks, supports transferable class-conditional probes, reveals controlled dissociations between standard and robust training, and uncovers stimulus-coordinate family effects across mouse visual cortex using the Allen Brain Observatory static gratings dataset.

\end{abstract}

\section{Introduction}
One of the central goals in machine learning and computational neuroscience is to compare internal representations in terms of their shared computations, invariances, and underlying mechanisms \cite{kriegeskorte2008representational,raghu2017svcca,kornblith2019cka,deVries2020AllenBO,Stringer2019HighDim}. A number of widely used methods for representational comparison include Representational Similarity Analysis (RSA), Canonical Correlation Analysis (CCA), and Centered Kernel Alignment (CKA), which are primarily focused on quantifying geometric similarity in stimulus-evoked activity patterns \cite{kriegeskorte2008representational, raghu2017svcca, kornblith2019cka}. While these approaches are effective for measuring representational alignment across models and biological systems, they are limited in determining whether aligned representations rely on the same local stimulus evidence to support downstream computations \cite{harvey2024what,davari2022reliability,Feather2025PrincipalDistortions}.

Several complementary perspectives including decodable-information and local-geometry approaches have been proposed to address this limitation \cite{harvey2024what, Berardino2017EigenDistortions,Zhou2024Discriminability,Lipshutz2024LocalInfoGeometry,Feather2025PrincipalDistortions,cayco_gajic2026msa}. The decoder-based perspective defines representation similarity in terms of downstream accessibility, characterizing representations by which variables are linearly decodable and therefore available to subsequent computational stages. Under this view, similarity measures evaluate whether different representations support comparable performance across a shared family of decoding tasks, thus capturing agreement among optimal linear readouts rather than alignment at the level of individual units \cite{harvey2024what}. Nevertheless, decoder agreement does not uniquely determine the underlying encoding process where distinct representations can support similar linear readouts while relying on different stimulus directions. Consistent with this limitation, prior work \cite{davari2022reliability} shows that activation-based similarity measures (e.g., CKA) are not invariant to simple post hoc transformations of representational point clouds, such as fixed subset translations followed by global mean-centering.

Another complementary perspective emphasizes local discriminability by studying how representations respond to small input changes, particularly how perturbations in stimulus or task coordinates are transformed within the representation. To quantify this, local geometry-based methods use derivatives, Jacobians, and Fisher or pullback metrics to identify directions in stimulus space that are amplified, suppressed, or made discriminable \cite{Green1966SDT,DayanAbbott2001,BrunelNadal1998FI,Pouget2000PopCodes,Berardino2017EigenDistortions,Zhou2024Discriminability,Lipshutz2024LocalInfoGeometry,Feather2025PrincipalDistortions}. 
However, local geometry does not specify the choice of perturbation family, how information should be aggregated across stimuli, or what dataset-level summary is sufficient to characterize the resulting discrimination tasks.

Motivated by these limitations, we ask whether two representations make the same small changes discriminable within a specified stimulus or task-coordinate family. We formalize this through expected local discriminability where perturbations are restricted to a chosen coordinate subspace, summarized by their second moments, and averaged over the stimulus distribution. This yields dataset-level signatures that can be compared across models where the resulting operator fully characterizes expected discriminability for the chosen second-moment perturbation family. The overall framework is shown in Figure~\ref{fig:cartoon}.

\begin{figure*}[t]
  \centering
  \includegraphics[width=\textwidth]{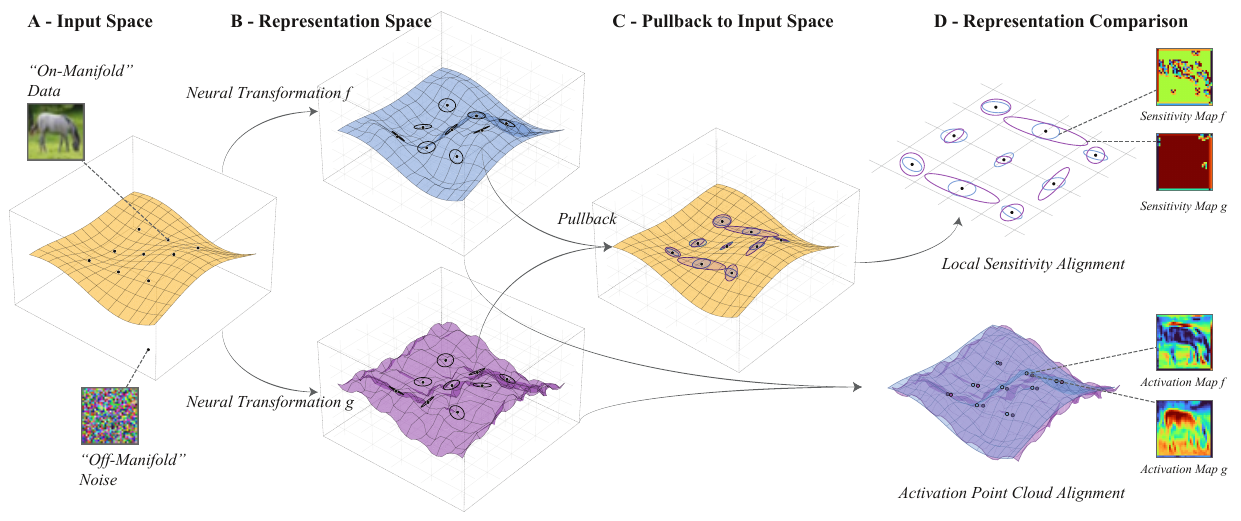}
    \caption{\textbf{Comparing representations via dataset-level sensitivity summaries.}
    (A) Input space contains structured data on a data manifold, contrasted with off-manifold noise.
    (B) Representation maps $f$ and $g$ transform the same data into feature spaces where local neighborhoods have different geometries.
    (C) Pulling these local geometries back to input/stimulus coordinates yields Jacobian-induced sensitivity ellipses, indicating which directions are amplified or suppressed.
    (D) We compare representations by averaging this local pullback/Fisher geometry over a specified perturbation family, rather than by aligning activation point clouds.}
      \label{fig:cartoon}
  \vspace{-0.9em}
\end{figure*}

We make three main contributions. First, we define second-moment local perturbation-discrimination tasks and identify the projected expected Fisher/pullback operator as their minimal complete dataset-level summary. Second, we compare regularized summaries with a log-spectral SPD distance, yielding \sras\ and a uniform multiplicative certificate over lifted task values. Third, we demonstrate that these summaries are empirically useful in both artificial and biological systems through a range of experiments, including layer correspondence, transferable class-conditional probes, robust-training dissociations, and Allen Brain Observatory static gratings recordings with explicitly defined stimulus-coordinate families.
\section{Local perturbation tasks and dataset-level sensitivity summaries}
\label{sec:theory}

We now turn the local-geometry viewpoint into a representation comparison. Starting from the Jacobian-based objects used in local-geometry methods \cite{Berardino2017EigenDistortions,Zhou2024Discriminability,Lipshutz2024LocalInfoGeometry,Feather2025PrincipalDistortions,cayco_gajic2026msa}, we specify which perturbations count, average their discriminability over stimuli, and identify the minimal summary for the resulting second-moment tasks.

\subsection{Representation maps and local discriminability}
A feedforward network can be written as a composition of layer-wise maps
\[
f_1,\dots,f_L,
\]
so that the representation at depth \(n\) is the input-to-layer function
\begin{equation}
\Phi_n(x)\coloneqq (f_n\circ f_{n-1}\circ \cdots \circ f_1)(x).
\end{equation}
This notation is standard when treating a representation as a function of the stimulus rather than as a collection of activation vectors alone \cite{lenc2015understanding}. It also makes clear that activation-based and Jacobian-based analyses begin from the same object. We therefore fix a layer and write
\[
f:\R^d\to\R^m
\]
for a differentiable representation map, with stimuli \(x\sim\mathcal D\). The corresponding pullback metric
\begin{equation}
M_f(x)\coloneqq J_f(x)^\top J_f(x)\in\mathbb S_+^d
\label{eq:pullback_metric}
\end{equation}
describes infinitesimal sensitivity: differentiability gives
\[
\|f(x+\epsilon v)-f(x)\|_2^2
=
\epsilon^2 v^\top M_f(x)v+o(\epsilon^2).
\]
This is the same local object used in recent intrinsic- and local-geometry approaches to representation comparison \citep{Lipshutz2024LocalInfoGeometry,Feather2025PrincipalDistortions,cayco_gajic2026msa}. Here it becomes a representation-level summary only after choosing a perturbation family and averaging discriminability over stimuli.

A standard operationalization of local sensitivity is discrimination between a stimulus and a small coordinate perturbation of it using noisy population activity \cite{Green1966SDT,DayanAbbott2001}. We formulate this in local stimulus coordinates. Let
\[
\psi:\Omega\subset\mathbb R^q\to\mathbb R^d,
\qquad
h_f=f\circ\psi,
\]
where \(q\) is the dimension of the stimulus-coordinate space and \(\psi\) parameterizes either pixel space, a data manifold, or an experimentally controlled stimulus family.

\begin{definition}[Local coordinate-perturbation discrimination task]
\label{def:local_discrimination_task}
Fix \(s\in\Omega\) and a small coordinate perturbation \(\eta\in\mathbb R^q\). An observer receives a noisy representation and must discriminate
\begin{equation}
H_0:\ r\sim \mathcal N(h_f(s),\Sigma)
\qquad\text{vs.}\qquad
H_1:\ r\sim \mathcal N(h_f(s+\eta),\Sigma),
\label{eq:local_discrimination_task}
\end{equation}
where \(\Sigma\in\mathbb S_{++}^m\) is a stimulus-independent noise covariance on the representation space.
\end{definition}

Linearizing $h_f$ gives
\[
h_f(s+\eta)\approx h_f(s)+J_{h_f}(s)\eta,
\qquad
J_{h_f}(s)=J_f(\psi(s))D\psi(s),
\]
so the local Fisher metric in the chosen stimulus coordinates is
\begin{equation}
I_{f,\psi}(s)
=
J_{h_f}(s)^\top \Sigma^{-1}J_{h_f}(s).
\label{eq:local_fisher}
\end{equation}
The corresponding squared discriminability and its second-moment average are
\begin{equation}
d'^2(s,\eta)=\eta^\top I_{f,\psi}(s)\eta,
\qquad
\E[d'^2\mid s]=\Tr(C I_{f,\psi}(s))
\quad
\text{for } C=\E[\eta\eta^\top],
\label{eq:local_discriminability}
\end{equation}
with derivation in Appendix~\ref{app:local_discriminability}. Thus local perturbation discriminability is governed by the same Fisher quadratic form used in population coding \cite{BrunelNadal1998FI,Pouget2000PopCodes}. In the isotropic case $\Sigma=\sigma^2 I$, $I_{f,\psi}(s)=\sigma^{-2}J_{h_f}(s)^\top J_{h_f}(s)$, so the Fisher geometry reduces to the pullback metric up to a scalar noise factor. For recurrent systems, \(f\) can denote either an unrolled fixed-time state map or, when dynamics converge, the equilibrium map \(x\mapsto r^*(x)\); Appendix~\ref{app:recurrence} gives the fixed-point form.

\subsection{Projected second-moment task families}
\label{sec:projected_tasks}

Pointwise discriminability becomes a representation-level comparison only after specifying the perturbation family and the stimulus distribution. We therefore fix a coordinate family, average over stimuli, and summarize the resulting second-moment discrimination tasks.

\begin{definition}[Projected second-moment local task family]
\label{def:projected_task_family}
Fix a smooth parameterization
\(\psi:\Omega\subset\mathbb R^q\to\mathbb R^d\), write \(h_f=f\circ\psi\), let
\(P\in\mathbb R^{q\times k}\) have orthonormal columns, and let \(s\sim\nu\).
Perturbations have the form \(s\mapsto s+Pz\), where \(z\in\mathbb R^k\) and
\(C_z=\E[zz^\top]\succeq0\). The projected expected Fisher operator is
\begin{equation}
F_{f,\psi,P,\Sigma}
\coloneqq
\E_{s\sim\nu}
\left[
P^\top J_{h_f}(s)^\top\Sigma^{-1}J_{h_f}(s)P
\right]
\in\mathbb S_+^k,
\label{eq:projected_fisher}
\end{equation}
and the associated expected local-discriminability task value is
\begin{equation}
T^{(\Sigma)}_{\psi,P,C_z}(f)
\coloneqq
\Tr(C_zF_{f,\psi,P,\Sigma}).
\label{eq:projected_task_value}
\end{equation}
When \(\psi\) is the identity or clear from context, we suppress it.
\end{definition}

By Eq.~\eqref{eq:local_discriminability},
\[
T^{(\Sigma)}_{\psi,P,C_z}(f)
=
\E_{s,z}\!\left[d'^2(s,Pz)\right].
\]
Thus \(\psi\), \(\operatorname{span}(P)\), and \(\Sigma\) define the local evidence family being compared.

\begin{proposition}[Task-summary property]
\label{prop:task_summary}
For fixed \(\psi\), \(P\), and \(\Sigma\succ0\), and for representation maps \(f\) and \(g\), the operator \(F_{f,\psi,P,\Sigma}\) determines all values
\(\{T^{(\Sigma)}_{\psi,P,C_z}(f):C_z\succeq0\}\). Conversely, if
\[
T^{(\Sigma)}_{\psi,P,C_z}(f)
=
T^{(\Sigma)}_{\psi,P,C_z}(g)
\qquad
\text{for all } C_z\succeq0,
\]
then
\[
F_{f,\psi,P,\Sigma}=F_{g,\psi,P,\Sigma}.
\]
\end{proposition}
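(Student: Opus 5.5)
The forward direction is immediate from Definition~\ref{def:projected_task_family}: for fixed \(\psi\), \(P\), and \(\Sigma\), every task value has the form \(T^{(\Sigma)}_{\psi,P,C_z}(f)=\Tr(C_zF_{f,\psi,P,\Sigma})\), a function of \(C_z\) and \(F_{f,\psi,P,\Sigma}\) only. Knowing the operator therefore fixes every value in the family. Before this, I will note that \(F_{f,\psi,P,\Sigma}\) is a well-defined symmetric PSD \(k\times k\) matrix. It is an expectation of matrices of the form \(A^\top\Sigma^{-1}A\) with \(A=J_{h_f}(s)P\), and each of these is symmetric PSD because \(\Sigma\succ0\). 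I will assume the expectation is finite, which is implicit in the definition.

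For the converse, I will set \(D\coloneqq F_{f,\psi,P,\Sigma}-F_{g,\psi,P,\Sigma}\). This matrix is symmetric, and by hypothesis and linearity of the trace, \(\Tr(C_zD)=0\) for every \(C_z\succeq0\). The key step is to show that PSD test matrices are rich enough to separate symmetric matrices. I will test with rank-one matrices \(C_z=uu^\top\) for \(u\in\R^k\), which gives \(u^\top Du=0\) for all \(u\). Since \(D\) is symmetric, polarization recovers the off-diagonal entries:
\[
2\,u^\top Dv=(u+v)^\top D(u+v)-u^\top Du-v^\top Dv=0 .
\]
Taking \(u=e_i\) and \(v=e_j\) then yields \(D_{ij}=0\) for all \(i,j\), so \(D=0\).

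An equivalent route is to use the spectral decomposition \(D=\sum_i\lambda_i w_iw_i^\top\) and take \(C_z=w_iw_i^\top\), which gives \(\lambda_i=0\). A third option is to observe that the PSD cone spans all symmetric matrices, so \(D\) is orthogonal, in the trace inner product, to a spanning set of a space that contains it.

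The only point that needs care is symmetry. The quadratic forms \(u^\top Du\) determine only the symmetric part of \(D\), so the argument depends on both operators being symmetric, which holds by construction. It is also worth noting that the rank-one test matrices \(uu^\top\) are realized as second moments, for example by \(z=u\) deterministically or by \(z=\xi u\) with \(\E[\xi^2]=1\). They therefore correspond to genuine perturbation families, and the hypothesis legitimately applies to them.
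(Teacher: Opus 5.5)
Your proof is correct and takes essentially the same approach as the paper, which reduces the converse to the fact that $\Tr(CD)=0$ for all $C\succeq0$ forces a symmetric $D$ to vanish, using rank-one test matrices $C=vv^\top$ (Lemmas~\ref{lem:psd_trace_separation} and~\ref{lem:linear_trace_summary}). Your polarization step makes explicit what the paper leaves implicit: a nonzero symmetric $D$ has some $v$ with $v^\top Dv\neq0$.
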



The proof uses the PSD trace-separation argument in Appendix~\ref{app:trace_summary_proofs}. The substantive point is the identification of the task-family statistic: for the specified perturbation family and observation model, \(F_{f,\psi,P,\Sigma}\) is exactly the minimal complete dataset-level summary of expected local discriminability.
In the isotropic case \(\Sigma=\sigma^2I\),
\begin{equation}
F_{f,\psi,P,\Sigma}
=
\sigma^{-2}G_{f,\psi,P},
\qquad
G_{f,\psi,P}
\coloneqq
\E_s\left[(J_{h_f}(s)P)^\top(J_{h_f}(s)P)\right].
\label{eq:projected_epm}
\end{equation}
We call \(G_{f,\psi,P}\) the projected expected pullback metric. 

Only the subspace \(\operatorname{span}(P)\), not the particular orthonormal basis \(P\), matters for the task family; changing \(\psi\) or \(\operatorname{span}(P)\) changes the scientific question. The basis-invariance calculation is given in Appendix~\ref{app:basis_invariance}.

\subsection{Gain, shape, and shared contrast directions}
\label{sec:gain_shape_probes}

The same projected operator can be decomposed into total sensitivity, directional allocation, and class-conditional contrasts.

\begin{definition}[Gain, shape, and class-conditional contrast summaries]
\label{def:gain_shape_contrast}
For a $k$-dimensional coordinate family $P$, define
\begin{equation}
\gamma_{f,\psi,P}
\coloneqq
\frac{1}{k}\Tr(G_{f,\psi,P}),
\qquad
\widehat G_{f,\psi,P}
\coloneqq
\frac{G_{f,\psi,P}}{\Tr(G_{f,\psi,P})}
\quad
\text{when } \Tr(G_{f,\psi,P})>0 .
\label{eq:gain_shape}
\end{equation}
Thus $\gamma$ is total expected sensitivity, while $\widehat G$ records directional allocation. For normalized $C\succeq0$ with $\Tr(C)=1$, the shape-only task value is
\[
\widehat T_{\psi,P,C}(f)=\Tr(C\widehat G_{f,\psi,P}),
\]
which is nontrivial only for $k\ge2$.

For a restricted distribution $\nu_y$, such as images from class $y$, define
\begin{equation}
G^{(y)}_{f,\psi,P}
\coloneqq
\E_{s\sim\nu_y}
\left[(J_{h_f}(s)P)^\top(J_{h_f}(s)P)\right].
\label{eq:class_conditional_G}
\end{equation}
Given model groups $A$ and $B$, define
\begin{equation}
\Delta G_y
=
\overline G^{(y)}_{A,\psi,P}
-
\overline G^{(y)}_{B,\psi,P},
\qquad
\Delta\widehat G_y
=
\overline{\widehat G}^{(y)}_{A,\psi,P}
-
\overline{\widehat G}^{(y)}_{B,\psi,P}.
\label{eq:group_contrast}
\end{equation}
Here the overline denotes averaging the corresponding class-conditional summary over models in the indicated group. The full contrast captures gain and allocation differences; the shape-only contrast isolates allocation after trace normalization.

\end{definition}

\begin{proposition}[Optimal shared contrast directions]
\label{prop:shared_contrast_directions}
Let $\Delta G_y\in\mathbb S^k$ be the class-conditional group contrast in Definition~\ref{def:gain_shape_contrast}. Then, for every unit vector $v\in\mathbb R^k$,
\[
v^\top\Delta G_yv
\]
is the between-group contrast in expected local quadratic sensitivity along the shared coordinate-family direction $v$. Moreover,
\[
\max_{\|v\|_2=1}v^\top\Delta G_yv=\lambda_{\max}(\Delta G_y),
\qquad
\min_{\|v\|_2=1}v^\top\Delta G_yv=\lambda_{\min}(\Delta G_y),
\]
attained by top and bottom eigenvectors. The corresponding rank-$r$ statements follow from Ky Fan's principle, and the same results hold with $\Delta G_y$ replaced by $\Delta\widehat G_y$; see Appendix~\ref{app:proof:shared_contrast_directions}.
\end{proposition}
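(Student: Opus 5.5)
The plan has three parts: an interpretive identity, then Rayleigh--Ritz, then Ky Fan.

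\emph{Interpretive claim.} I will start from Eq.~\eqref{eq:class_conditional_G}. For a fixed unit vector $v\in\R^k$, linearity of expectation gives
\[
v^\top G^{(y)}_{f,\psi,P}v=\E_{s\sim\nu_y}\|J_{h_f}(s)Pv\|_2^2 .
\]
By the differentiability expansion after Eq.~\eqref{eq:pullback_metric}, applied to $h_f$, this is the expected second-order coefficient of $\|h_f(s+\epsilon Pv)-h_f(s)\|_2^2$. In other words, it is the expected local quadratic sensitivity along the coordinate direction $Pv$. This is also the isotropic-noise task value $T_{\psi,P,vv^\top}$ up to the factor $\sigma^{-2}$, with the class-conditional distribution $\nu_y$ in place of $\nu$. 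Averaging over the models in each group is linear, and so is taking the difference of the two group averages. Hence $v^\top\Delta G_yv$ is exactly the between-group difference of these averaged sensitivities. The direction $v$ is meaningful across groups because $P$ and $\psi$ are fixed, so $Pv$ is the same stimulus-space direction for every model.

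\emph{Extremal claims.} $\Delta G_y$ is a difference of averages of symmetric matrices, so it is symmetric, although generally indefinite. The spectral theorem gives $\Delta G_y=U\Lambda U^\top$ with $U$ orthogonal. Setting $w=U^\top v$, which is again a unit vector, we get $v^\top\Delta G_y v=\sum_i\lambda_i w_i^2$. This is a convex combination of the eigenvalues, so it lies in $[\lambda_{\min},\lambda_{\max}]$, and the endpoints are attained at the corresponding eigenvectors (Rayleigh--Ritz).

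\emph{Rank-$r$ statements.} For $V\in\R^{k\times r}$ with $V^\top V=\Id$, Ky Fan's principle gives
\[
\max_V \Tr(V^\top\Delta G_yV)=\sum_{i\le r}\lambda_i ,
\]
attained by the top-$r$ eigenvectors, and symmetrically for the minimum with the bottom-$r$ eigenvectors. I would cite Ky Fan directly. If a self-contained argument is wanted, note that $\Tr(V^\top\Delta G_yV)=\sum_i\lambda_i\|U^\top V e_i\|^2$ after regrouping; the resulting weights lie in $[0,1]$ and sum to $r$, so the sum is maximized by putting weight $1$ on the $r$ largest eigenvalues. Interpretively, the trace is the sum of the single-direction contrasts over an orthonormal set of $r$ shared directions.

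\emph{Shape-only version.} Each normalized summary $\widehat G$ is symmetric, being a scalar multiple of $G$ and defined when the trace is positive. Hence $\Delta\widehat G_y$ is symmetric and the same Rayleigh--Ritz and Ky Fan arguments apply verbatim. The interpretation changes only in that $v^\top\widehat G v$ is the shape-only task value $\widehat T_{\psi,P,vv^\top}$ with $\Tr(vv^\top)=1$, i.e., the fraction of total sensitivity allocated to direction $v$.

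The only real care point is the interpretive step. There I must check that group averaging commutes with the quadratic form, and that the trace normalization is applied per model before averaging, as the overline in Eq.~\eqref{eq:group_contrast} indicates. Everything else is standard spectral theory.
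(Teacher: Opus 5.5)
Your proposal is correct and follows the paper's proof step for step. Both push the identity $v^\top G^{(y)}_{m,\psi,P}v=\E_{s\sim\nu_y}\|J_{h_m}(s)Pv\|_2^2$ through the linear group averaging and difference, then use Rayleigh--Ritz for the extremes, Ky Fan for rank $r$, and symmetry of $\Delta\widehat G_y$ for the shape-only case. One small slip in your optional self-contained Ky Fan aside: the weights should be $\|V^\top u_i\|_2^2$ with $u_i=Ue_i$, i.e. the squared row norms of $U^\top V$, not $\|U^\top Ve_i\|^2$, which are column norms and all equal $1$; with that correction the $[0,1]$ bounds and the sum-to-$r$ property you invoke hold.
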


The contrast directions in Proposition~\ref{prop:shared_contrast_directions} are shared across the distribution defining $G^{(y)}$, unlike image-conditioned local distortion objectives. If $\Delta G_y(x)$ is an imagewise contrast and $q_x(v)=v^\top\Delta G_y(x)v$, then
\begin{equation}
\max_{\|v\|=1}\E_x[q_x(v)]
\le
\E_x\!\left[\max_{\|v\|=1}q_x(v)\right].
\label{eq:shared_vs_pointwise}
\end{equation}

The construction is therefore distribution-level: its directions are shared contrasts under \(G^{(y)}\), not separately optimized perturbations around each image. In the experiments below, we test whether these shared directions can be externalized into finite perturbation sets that transfer across held-out images and held-out models, making the construction complementary to eigen-distortions and principal distortions \cite{Berardino2017EigenDistortions,Lipshutz2024LocalInfoGeometry,Feather2025PrincipalDistortions}.

\section{\sras: geometry-aware comparison of sensitivity summaries}
\label{sec:sras}

Section~\ref{sec:projected_tasks} maps each representation and perturbation family to a PSD task summary. For a nonzero PSD summary \(A_{f,P}\in\mathbb S_+^k\), where \(A\) may denote either \(G\) or the noise-aware Fisher summary \(F\), we use the trace-scaled SPD lift
\begin{equation}
\widetilde A_{f,P}
\coloneqq
A_{f,P}
+
\epsreg\frac{\Tr(A_{f,P})}{k}I_k
\in\mathbb S_{++}^k .
\label{eq:spd_lift}
\end{equation}
When \(A=G\) we write \(\widetilde G\), and when \(A=F\) we write \(\widetilde F\). For lifted tasks, write \(\widetilde T_{P,C}(f)=\Tr(C\widetilde A_{f,P})\); the trace-summary argument from Proposition~\ref{prop:task_summary} applies unchanged.

We compare lifted signatures with the affine-invariant Riemannian metric on \(\mathbb S_{++}^k\) \cite{Pennec2006SPD},
\begin{equation}
\AIRM(A,B)
\coloneqq
\left\|
\log\!\left(A^{-1/2}BA^{-1/2}\right)
\right\|_F,
\label{eq:airm}
\end{equation}
and define, for a fixed perturbation family \(P\),
\begin{equation}
d_{\sras,P}(f,g)
\coloneqq
\frac{1}{\sqrt{k}}\AIRM(\widetilde G_{f,P},\widetilde G_{g,P}),
\qquad
\sras_P(f,g)
\coloneqq
\exp[-d_{\sras,P}(f,g)].
\label{eq:sras}
\end{equation}
The normalization by \(\sqrt{k}\) keeps scores comparable across family dimensions; in noise-aware settings we replace \(\widetilde G\) by \(\widetilde F\).
\begin{theorem}[Uniform control over lifted second-moment tasks]
\label{thm:uniform_task_control}
Let \(A,B\in\mathbb S_{++}^k\) and \(d=\AIRM(A,B)\). Then
\[
e^{-d}A\preceq B\preceq e^dA,
\]
and consequently, for every \(C\succeq0\),
\begin{equation}
e^{-d}\Tr(CA)
\le
\Tr(CB)
\le
e^d\Tr(CA).
\label{eq:uniform_task_control}
\end{equation}
\end{theorem}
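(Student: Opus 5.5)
The plan is to reduce to the whitened matrix \(M\coloneqq A^{-1/2}BA^{-1/2}\in\mathbb S_{++}^k\) and bound its spectrum using \(d\).

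First, diagonalize \(M\) as \(M=U\,\mathrm{diag}(\mu_1,\dots,\mu_k)U^\top\) with \(U\) orthogonal and all \(\mu_i>0\). Then \(\log M=U\,\mathrm{diag}(\log\mu_i)U^\top\), so by orthogonal invariance of the Frobenius norm
\[
d^2=\|\log M\|_F^2=\sum_{i=1}^k(\log\mu_i)^2 .
\]
Each term is bounded by the sum, so \(|\log\mu_i|\le d\) for every \(i\), that is, \(e^{-d}\le\mu_i\le e^d\). Equivalently, in Loewner order,
\[
e^{-d}I_k\preceq M\preceq e^dI_k .
\]

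Next, undo the whitening by congruence with \(A^{1/2}\). The map \(X\mapsto A^{1/2}XA^{1/2}\) preserves the Loewner order, because \(v^\top A^{1/2}XA^{1/2}v=(A^{1/2}v)^\top X(A^{1/2}v)\). Applying it to the sandwich above gives \(A^{1/2}MA^{1/2}=B\) in the middle and \(e^{\pm d}A\) on the outside. This is the first claim, \(e^{-d}A\preceq B\preceq e^dA\).

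For the trace inequality, use that \(X\succeq0\) and \(C\succeq0\) imply \(\Tr(CX)=\Tr(C^{1/2}XC^{1/2})\ge0\). Apply this with \(X=B-e^{-d}A\) and with \(X=e^dA-B\), then use linearity of the trace to obtain \eqref{eq:uniform_task_control}.

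The only step that needs any care is the passage from the Frobenius norm to the extreme eigenvalues, namely \(\max_i|\log\mu_i|\le\|\log M\|_F\), together with the fact that \(M\) is symmetric positive definite so that its logarithm is well defined. The rest is routine Loewner-order manipulation. The bound is loose by up to a factor \(\sqrt{k}\) in the exponent, since the operator norm of \(\log M\) would suffice, but the statement does not require the sharper form.
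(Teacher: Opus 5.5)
Your proposal is correct and follows essentially the same route as the paper: whiten to \(M=A^{-1/2}BA^{-1/2}\), bound each \(|\log\lambda_i(M)|\) by the Frobenius norm \(d\), undo the whitening by congruence with \(A^{1/2}\), and pass to traces using \(C\succeq0\). The only cosmetic difference is that the paper applies congruence by \(C^{1/2}\) to the whole sandwich before taking traces, whereas you apply \(\Tr(C^{1/2}XC^{1/2})\ge0\) to each of the two differences \(X\). Your closing remark on the \(\sqrt{k}\) slack matches the paper's sharper operator-norm version in Appendix~\ref{app:dinfty}.
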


Applying Theorem~\ref{thm:uniform_task_control} with \(A=\widetilde G_{f,P}\) and \(B=\widetilde G_{g,P}\) gives uniform multiplicative control over lifted task values. If
\[
\delta=d_{\sras,P}(f,g),
\qquad
S=\sras_P(f,g)=e^{-\delta},
\]
then
\begin{equation}
S^{\sqrt{k}}\widetilde T_{P,C}(f)
\le
\widetilde T_{P,C}(g)
\le
S^{-\sqrt{k}}\widetilde T_{P,C}(f).
\label{eq:sras_score_task_bound}
\end{equation}
Thus \(\sras\) is a dimension-normalized similarity score, while the certificate itself is controlled by the unnormalized AIRM distance. The task-control viewpoint also has a sharp worst-case form: replacing the Frobenius log-spectral distance in Theorem~\ref{thm:uniform_task_control} by its operator-norm analogue exactly captures the largest possible log-multiplicative disagreement over lifted task values (Appendix~\ref{app:dinfty}). Separately, because the construction depends on Jacobians rather than activation point clouds, it is insensitive to post-hoc representation-space shifts that are constant with respect to the stimulus, including the subset-translation construction of \cite{davari2022reliability} (Appendix~\ref{app:subset_translation}).
\section{Experiments}
\label{sec:experiments}
\paragraph{Experimental setup.}
We evaluate three model settings and one biological application: Tiny10 CNNs for layer matching; ResNet-18 \cite{he2016deep} and small-ViT \cite{dosovitskiy2020image} CIFAR-10 \cite{krizhevsky2009learning} banks for architecture comparisons; fixed-architecture ResNet-18 banks trained with ERM, PGD~\cite{madry2018towards}, TRADES~\cite{zhang2019trades}, or MART~\cite{wang2020mart} for regime comparisons; and Allen static-gratings recordings for biologically defined stimulus-coordinate families \((\theta,\rho,\phi)\). Activation baselines and sensitivity comparisons use the same checkpoints or neural-response summaries; details are in Appendix~\ref{app:exp_setup}.

\subsection{Layer matching examination}
\label{sec:layer_matching}

As a sanity check, we ask whether independently trained Tiny10 CNNs recover the expected layer correspondence under the retrieval protocol of \citet{kornblith2019cka}. A match is correct when the architecturally corresponding layer is the top row-wise and column-wise retrieval in the \(8\times8\) layer-similarity matrix.

Table~\ref{tab:layer_identification} summarizes the results. Under random perturbation families, \sras\ already gives a meaningful correspondence signal at \(K=16\) and improves with family dimension, reaching \(85.1\%\) at \(K=768\). Under PCA-basis families, it reaches \(93.8\%\), nearly matching linear and RBF CKA at \(94.4\%\) and \(94.0\%\), respectively. Thus the experiment is a coherence check: the sensitivity summaries recover layer identity and off-diagonal structure, while strong activation baselines remain best in raw accuracy.

We also include two pointwise local-geometry controls on the same \(n=1024\) benchmark and projected families: pointwise AIRM and a down-projected MSA-style baseline. These test the aggregation choice by contrasting dataset-level expected task-family operators with imagewise metric comparisons averaged over the dataset. pw-AIRM is strong at small \(K\) but degrades at larger \(K\), while full \sras\ gives the strongest high-\(K\) local-geometry result. The MSA-style control is near chance under random projections but remains above chance under PCA-basis families, indicating that the projected family can carry layer information even when the spectral-ratio summary is less discriminative than \sras. Appendix~\ref{app:sr_diagnosis} gives diagnostic context. Because these baselines are down-projected for tractability, we treat them as targeted controls rather than an exhaustive benchmark; Appendix Table~\ref{tab:layer_matching_compute} summarizes the resulting compute and memory scaling. Additional diagnostics are in Appendix~\ref{app:layer_matching}.

\begin{table*}[t]
\centering
\begingroup
\fontsize{6.3}{7.4}\selectfont
\setlength{\tabcolsep}{2.8pt}
\renewcommand{\arraystretch}{1.08}
\caption{
Layer-identification accuracy. All entries use the shared \(n=1024\)-image benchmark and report mean \(\pm\) SEM. Local-geometry cells are random / PCA-basis results. Compute/memory scaling is in Appendix Table~\ref{tab:layer_matching_compute}.
}
\label{tab:layer_identification}
\resizebox{0.99\textwidth}{!}{%
\begin{tabular}{@{}lccccc@{\hspace{0.8em}}l c@{}}
\toprule
& \multicolumn{5}{c}{\textbf{Local geometry: random / PCA basis}}
& \multicolumn{2}{c}{\textbf{Activation baselines}} \\
\cmidrule(lr){2-6}\cmidrule(l){7-8}
\textbf{Method}
& \(\mathbf{16}\)
& \(\mathbf{32}\)
& \(\mathbf{64}\)
& \(\mathbf{128}\)
& \textbf{Ref.}
& \textbf{Method}
& \textbf{Acc.} \\
\midrule

S-RAS (full)
& \(71.7{\pm}1.6 / 77.5{\pm}1.9\)
& \(72.6{\pm}1.6 / 78.8{\pm}1.8\)
& \(74.9{\pm}1.4 / 79.6{\pm}1.6\)
& \(76.5{\pm}1.2 / 81.5{\pm}1.5\)
& \(85.1{\pm}1.2 / \mathbf{93.8{\pm}0.9}^{\dagger}\)
& CKA (linear)
& \(\mathbf{94.4{\pm}0.9}^{\ddagger}\) \\

S-RAS (shape)
& \(49.2{\pm}1.3 / 57.1{\pm}1.9\)
& \(53.1{\pm}1.6 / 62.8{\pm}1.9\)
& \(54.0{\pm}1.6 / 65.0{\pm}1.9\)
& \(54.2{\pm}1.8 / 67.4{\pm}2.1\)
& \(55.7{\pm}1.7 / 78.1{\pm}1.6\)
& CKA (RBF)
& \(94.0{\pm}0.8\) \\

pw-AIRM
& \(73.8{\pm}1.6 / 83.2{\pm}1.3\)
& \(73.5{\pm}1.5 / 82.5{\pm}1.2\)
& \(60.4{\pm}1.3 / 75.8{\pm}1.1\)
& \(50.4{\pm}1.0 / 56.9{\pm}1.3\)
& \(45.6{\pm}1.1 / 46.3{\pm}1.1\)
& Procrustes
& \(66.0{\pm}0.8\) \\

MSA\(^*\)
& \(15.3{\pm}0.6 / 39.9{\pm}1.3\)
& \(13.8{\pm}0.4 / 35.0{\pm}1.1\)
& \(14.3{\pm}0.5 / 32.5{\pm}1.0\)
& \(14.0{\pm}0.5 / 31.5{\pm}0.6\)
& \(14.2{\pm}0.5 / 29.2{\pm}0.7\)
& CCA (\(R^2\))
& \(12.5{\pm}0.0\) \\

\bottomrule
\end{tabular}%
}
\par\vspace{0.2em}
\begin{minipage}{0.99\textwidth}
\footnotesize
Ref. is \(K=768\) for S-RAS and \(K=256\) for pointwise controls.
\(^\dagger\) Best local-geometry result. 
\(^\ddagger\) Best activation-based and overall result.
\(^*\) Exact local spectral-ratio distance averaged pointwise over images on family-restricted projected metrics.
\end{minipage}
\endgroup
\vspace{-0.5em}
\end{table*}

\subsection{Held-out class-conditional diagnostic probes}
\label{sec:diagnostic_probes}

\begin{figure*}[t]
    \centering
    \includegraphics[width=\textwidth]{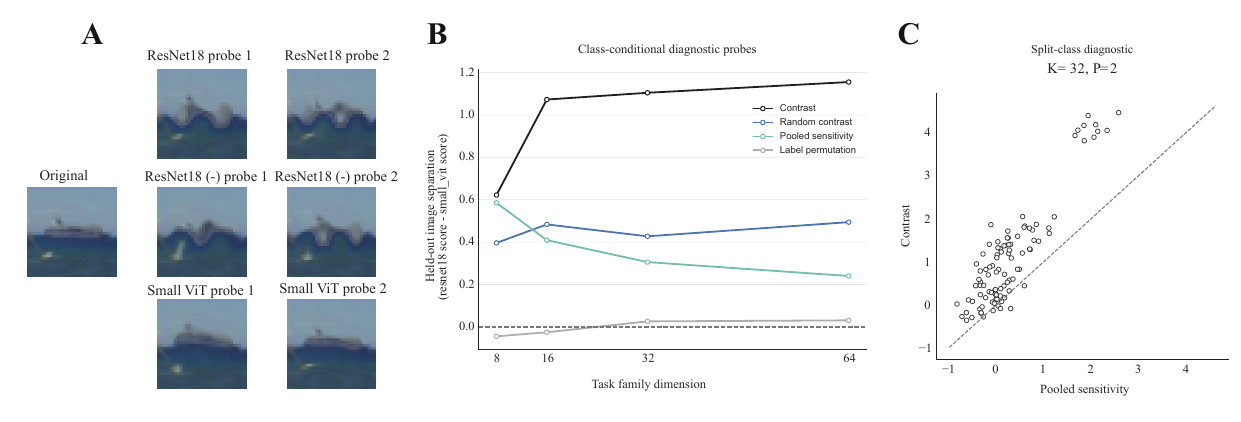}
    \caption{
\textbf{Held-out class-conditional diagnostic probes.}
(A) Example probes at \(K=32\), two probes per side.
(B) Mean held-out image separation across task-family dimensions and controls.
(C) Split-class diagnostic visualization against pooled sensitivity; formal tests aggregate to the held-out model-split level.
}
    \label{fig:diagnostic_probes_main}
\end{figure*}

Layer matching tests whether sensitivity summaries recover a basic correspondence signal. We next ask whether class-conditional sensitivity contrasts can be externalized into finite perturbations whose effects transfer to held-out images and held-out models. This instantiates Section~\ref{sec:gain_shape_probes}: a learned smooth-deformation parameterization defines a \(q\)-dimensional coordinate system, \(P_K\) selects a \(K\)-dimensional task-family subspace, and leading positive and negative eigendirections of the discovery-set contrast are instantiated as finite image deformations.

For each class \(y\), we estimate the summaries in Eq.~\eqref{eq:class_conditional_G}, form the discovery-set contrast in Eq.~\eqref{eq:group_contrast}, and evaluate only on held-out images and models. The probes optimize between-group local quadratic sensitivity, not pooled sensitivity shared by both groups. We compare them with random-contrast probes, pooled-sensitivity probes from \(G_A^{(y)}+G_B^{(y)}\), and a label-permutation null; construction details, split protocol, amplitudes, held-out model AUC, and ablations are in Appendix~\ref{app:diagnostic_probes}.

At the main setting (\(K=32\), two probes per side), contrast-derived probes produce larger held-out image separation than all controls: \(1.10\), compared with \(0.43\) for random contrast, \(0.31\) for pooled sensitivity, and \(0.03\) for the label-permutation null. After aggregating over classes within each held-out model split, contrast-derived probes outperform all three controls in all \(10\) splits, giving one-sided paired Wilcoxon \(p=9.77\times10^{-4}\) for each comparison. The effect is present by \(K=16\), stable through \(K=64\), and robust across probe counts at \(K=32\). We therefore treat held-out image transfer of finite probes, rather than held-out model classification, as the primary claim.

\subsection{Robust training as a controlled dissociation}
\label{sec:robust_dissociation}

We next apply the same contrast-and-probe construction to CIFAR-10 ResNet-18 banks trained with ERM, PGD, TRADES, or MART. Because architecture is fixed, separations cannot be attributed to model class. We use the pre-logit representation and \(K=32\) as the default family size.

\begin{figure*}[t]
    \centering
    \includegraphics[width=\textwidth]{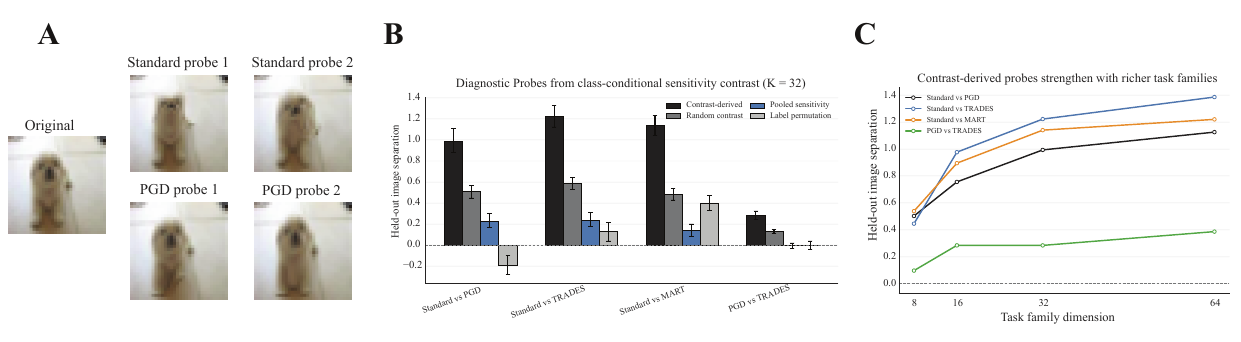}
    \caption{
    \textbf{Controlled regime shifts under robust training.}
    (A) Example probes from the standard-versus-PGD contrast.
    (B) Held-out image separation at \(K=32\), comparing contrast-derived probes to random-contrast, pooled-sensitivity, and label-permutation controls. Error bars in (B) are 95\% percentile bootstrap intervals over held-out model-split means.
    (C) Separation as a function of task-family dimension \(K\).
    }
    \label{fig:adv_std_main}
\end{figure*}

Figure~\ref{fig:adv_std_main} shows that contrast-derived probes are smooth, finite perturbations and that they separate standard models from PGD, TRADES, and MART on held-out images, with mean separations \(0.99\), \(1.22\), and \(1.14\), respectively. In all three standard-versus-robust comparisons, contrast-derived probes outperform the three controls at \(K=32\) (one-sided Wilcoxon \(p=9.77\times10^{-4}\) over held-out splits). The same construction also separates PGD from TRADES more weakly (\(0.28\)), and the effect is already present by \(K=16\) and generally strengthens through \(K=32\) and \(64\).

Appendix~\ref{app:robust_dissociation} extends the analysis to all six regime pairs, layer ablations, and direct model-space separations. There, TRADES versus MART shows a smaller but reliable effect, whereas PGD versus MART shows little separation under the present family. This selectivity is informative: the probe battery is not simply flagging every regime pair, but is strongest where training objectives induce clearer changes in local evidence use.

\subsection{Biological static-gratings application}
\label{sec:biology_static_gratings}

We next instantiate the theory in an experimentally parameterized stimulus space. The Allen Brain Observatory Visual Coding dataset provides a standardized survey of visually evoked calcium responses across mouse visual cortical areas and depths \cite{deVries2020AllenBO}. We focus on static gratings, whose controlled variables define the \(q=3\) coordinate system
\[
s=(\theta,\rho,\phi),
\qquad
\rho=\log_2(\text{spatial frequency}),
\]
with \(6\) orientations, \(5\) spatial frequencies, and \(4\) phases, yielding \(120\) conditions \cite{AllenStaticGratings}. This setting matches the parameterized-family version of Section~\ref{sec:theory} where the perturbation family is a biologically meaningful stimulus-coordinate family rather than an arbitrary pixel subspace. It also connects to population-coding theory, where Fisher information and correlated variability determine local stimulus discriminability \cite{SeungSompolinsky1993,AbbottDayan1999,Ecker2011NoiseCorrelations}, and to neural-manifold analyses of visual cortex population responses \cite{Stringer2019HighDim,Beshkov2024Topology}.

For each experiment \(e\), we estimate condition-mean responses \(\mu_e(s)\in\mathbb R^{n_{\mathrm{match}}}\), pooled within-condition covariance \(\Sigma_e\), and finite-difference response Jacobian \(D\mu_e(s)\). The noise-aware Fisher summary and unwhitened local-geometry baseline are
\[
F_e
=
\E_s\!\left[
D\mu_e(s)^\top \Sigma_e^{-1}D\mu_e(s)
\right],
\qquad
G_e
=
\E_s\!\left[
D\mu_e(s)^\top D\mu_e(s)
\right].
\]
We also compare trace-normalized shape-only versions, \(\widehat F_e=F_e/\Tr(F_e)\) and \(\widehat G_e=G_e/\Tr(G_e)\), to separate total sensitivity scale from directional allocation. Full preprocessing, covariance regularization, finite differences, family restrictions, and donor-distinct retrieval details are in Appendix~\ref{app:biology_details}.

\begin{figure*}[t]
    \centering
    \includegraphics[width=\textwidth]{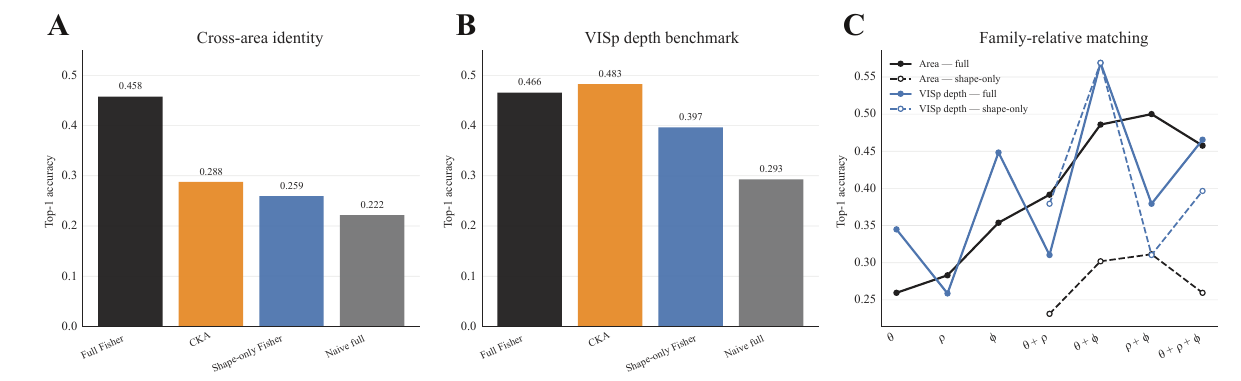}
    \caption{
    \textbf{Biological sensitivity geometry in mouse visual cortex.}
    (A) Cross-area identity under donor-distinct matched-count comparison.
    (B) VISp-depth identity is harder. Full Fisher remains competitive with CKA and above naive geometry.
    (C) Matching depends on the grating-coordinate family. Dashed curves show trace-normalized shape-only summaries for families with dimension at least two.
    }
    \label{fig:biology_main}
\end{figure*}

\begin{table}[t]
\centering
\begingroup
\fontsize{7.0}{8.0}\selectfont
\setlength{\tabcolsep}{3.0pt}
\renewcommand{\arraystretch}{1.08}
\caption{\textbf{Biological top-1 retrieval.} Entries are donor-distinct top-1 identity accuracy after matched-count control; reliability is mean split-half reliability after matching.}
\label{tab:biology_top1}
\resizebox{0.99\linewidth}{!}{%
\begin{tabular}{@{}lccccrcc@{}}
\toprule
Cohort & \(n_{\mathrm{match}}\) & Reliability & Chance
& Full Fisher & CKA & Shape-only Fisher & Naive full \\
\midrule
Area & 60 & 0.941 & 0.167 & \textbf{0.458} & 0.288 & 0.259 & 0.222 \\
VISp depth & 45 & 0.932 & 0.250 & 0.466 & \textbf{0.483} & 0.397 & 0.293 \\
\bottomrule
\end{tabular}%
}
\endgroup
\end{table}

We evaluate donor-distinct retrieval in cross-area and within-\(\mathrm{VISp}\) depth cohorts, matching counts before retrieval because Fisher scale depends on population size. In the primary area benchmark, full Fisher reaches \(0.458\) top-1 accuracy, above CKA \(0.288\), shape-only Fisher \(0.259\), and naive full geometry \(0.222\) (Table~\ref{tab:biology_top1}, Figure~\ref{fig:biology_main}A). Its advantage over CKA is significant under paired query-level permutation testing after FDR correction (\(q=8.0\times10^{-4}\)), as is the advantage over naive full geometry. Thus the area-level signal is not explained by activation alignment or unwhitened derivative geometry alone; the covariance term is part of the comparison, consistent with the role of correlated variability in population-code accuracy \cite{SeungSompolinsky1993,AbbottDayan1999,Ecker2011NoiseCorrelations}.

The VISp-depth benchmark is more nuanced. CKA reaches \(0.483\) top-1 accuracy, full Fisher \(0.466\), shape-only Fisher \(0.397\), and naive full \(0.293\). On diagonal AUC, full Fisher and CKA are nearly tied, \(0.549\) and \(0.547\), while naive full remains lower at \(0.513\) (Appendix Table~\ref{tab:biology_diag_auc}). Thus VISp depth identity is a harder within-area comparison, where the noise-aware Fisher summary remains informative but no longer separates as sharply from activation-based summaries.

For \(Q\subseteq\{\theta,\rho,\phi\}\), we compare \(F_{e,Q}=P_Q^\top F_eP_Q\) and, for \(|Q|\ge2\), its trace-normalized version. The best-matching family is not always the full coordinate set; in the area cohort, \(\rho+\phi\) reaches \(0.500\) top-1 accuracy and \(\theta+\phi\) reaches \(0.486\), both above the full \(\theta+\rho+\phi\) family at \(0.458\); in VISp depth, \(\theta+\phi\) reaches \(0.569\), also exceeding the full family. Thus adding coordinates can dilute the matching signal when the added direction emphasizes variation irrelevant to the population distinction under study. The gain/shape decomposition adds a complementary view where area identity depends on both total noise-aware sensitivity and directional allocation, while shape-only Fisher gives the strongest area diagonal-AUC and adds information beyond activation, decoder-profile, and mapping baselines in an auxiliary pair-classification analysis (Appendix Table~\ref{tab:biology_incremental}).

\section{Discussion}

Activation-based comparisons are most informative when the question concerns stimulus-evoked organization or downstream readout structure. We asked a complementary question: whether two representations make the same small stimulus directions discriminable on average over a dataset. We formalized this through second-moment local perturbation tasks, used the projected expected pullback/Fisher operator as the minimal complete dataset-level summary for that family, and compared regularized summaries with a log-spectral SPD geometry.

The resulting similarity is family-relative rather than universal: a perturbation family \(P\) defines which local evidence is under study, so agreement under one family need not imply agreement under another. This dependence appears across the experiments: perturbation-family choice affects layer correspondence and probe transfer in models, while Allen static-gratings recordings show that noise-aware Fisher geometry and grating-coordinate choice affect area and depth matching.

Empirically, \sras\ recovers meaningful layer-correspondence structure and sharper off-diagonal suppression, though not uniformly higher raw top-match accuracy than strong activation baselines. The stronger evidence comes from targeted diagnostics: class-conditional probes transfer to held-out images and models, robust-training contrasts reveal regime-specific evidence-use differences, and static-gratings recordings expose family-dependent biological sensitivity structure.

The framework remains local, dataset-level, and second-moment based; it does not capture image-conditioned optima, higher-order effects, or strongly nonlinear finite-perturbation behavior. Its discriminability interpretation depends on the stated observation model, especially stimulus-independent noise in the Fisher form. Broader benchmarks, especially with larger architectures using subsampling, lower-rank families, and distributed JVP batching, are needed to identify when dataset-level sensitivity summaries reveal evidence-use structure not captured by activation alignment or pointwise local-geometry baselines.

\section*{Code and Data Availability}
Code for reproducing the main analyses is available at \url{https://github.com/amirhosseinyavari/neural-sensitivity-geometry}. The Allen Brain Observatory Visual Coding data are publicly available from the Allen Institute.
\FloatBarrier
\FloatBarrier

\bibliographystyle{plainnat}
\bibliography{ref}
\appendix

\section{Theory details and proofs}
\label{app:appendix}

\subsection{Derivation of local discriminability}
\label{app:local_discriminability}

For the Gaussian discrimination task in Eq.~\eqref{eq:local_discrimination_task}, the two hypotheses share covariance $\Sigma$ and differ only in their means,
\[
\mu_0=h_f(s),
\qquad
\mu_1=h_f(s+\eta).
\]
For equal-covariance Gaussian discrimination, the squared discriminability is
\[
d'^2
=
(\mu_1-\mu_0)^\top\Sigma^{-1}(\mu_1-\mu_0).
\]
Under local linearization,
\[
h_f(s+\eta)
\approx
h_f(s)+J_{h_f}(s)\eta,
\]
so
\[
d'^2(s,\eta)
=
\eta^\top J_{h_f}(s)^\top\Sigma^{-1}J_{h_f}(s)\eta
=
\eta^\top I_{f,\psi}(s)\eta,
\]
with $I_{f,\psi}(s)$ defined in Eq.~\eqref{eq:local_fisher}. If $C=\E[\eta\eta^\top]$, then
\[
\E[d'^2\mid s]
=
\E[\eta^\top I_{f,\psi}(s)\eta\mid s]
=
\Tr(CI_{f,\psi}(s)),
\]
which gives Eq.~\eqref{eq:local_discriminability}. For the projected family $\eta=Pz$ with $C_z=\E[zz^\top]$,
\[
\E_{s,z}[d'^2(s,Pz)]
=
\Tr\!\left(
C_z
\E_s[
P^\top J_{h_f}(s)^\top\Sigma^{-1}J_{h_f}(s)P]
\right)
=
\Tr(C_zF_{f,\psi,P,\Sigma}),
\]
which is the task value in Eq.~\eqref{eq:projected_task_value}.

\subsection{Trace-task summaries and minimality}
\label{app:trace_summary_proofs}

All completeness/minimality claims used in the main text are instances of the same elementary separation fact.

\begin{lemma}[PSD trace probes separate symmetric matrices]
\label{lem:psd_trace_separation}
Let $D\in\mathbb S^k$. If
\[
\Tr(CD)=0
\qquad
\text{for all } C\succeq0,
\]
then $D=0$.
\end{lemma}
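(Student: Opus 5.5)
The plan is to test $D$ against rank-one PSD probes. Each $C=vv^\top$ with $v\in\mathbb R^k$ is positive semidefinite, so the hypothesis gives
\[
0=\Tr(vv^\top D)=v^\top D v\qquad\text{for all } v\in\mathbb R^k .
\]
Thus the quadratic form of $D$ vanishes identically, and it remains to show that a symmetric matrix with vanishing quadratic form is zero.

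For this I would use either of two standard routes. The first is to take $v$ to be a unit eigenvector of $D$ with eigenvalue $\lambda$, which exists by the spectral theorem since $D\in\mathbb S^k$. This gives $\lambda=v^\top Dv=0$ for every eigenvalue, and orthogonal diagonalizability then forces $D=0$.

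The second route is polarization. From $(e_i+e_j)^\top D(e_i+e_j)=0$ together with $e_i^\top De_i=e_j^\top De_j=0$ we get $2D_{ij}=0$, using $D_{ij}=D_{ji}$. So every entry of $D$ vanishes.

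There is no real obstacle here. The only point needing care is that symmetry of $D$ is used essentially: for a general matrix, vanishing of $v^\top Dv$ only kills the symmetric part. This is exactly why the lemma is stated on $\mathbb S^k$.

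In the application to Proposition~\ref{prop:task_summary}, one takes $D=F_{f,\psi,P,\Sigma}-F_{g,\psi,P,\Sigma}$, which is symmetric, and uses linearity of the trace. The forward direction of that proposition is immediate from the formula $T=\Tr(C_zF)$.
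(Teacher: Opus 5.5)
Your proposal is correct and follows essentially the same argument as the paper: test $D$ against the rank-one probes $C=vv^\top$ and conclude from the vanishing of the quadratic form. Your spectral-theorem or polarization argument, which shows that a symmetric matrix with identically zero quadratic form is zero, supplies the step the paper asserts without proof (``if $D\neq0$, some $v$ has $v^\top Dv\neq0$''), and it correctly identifies symmetry as the essential hypothesis.
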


\begin{proof}
If $D\neq0$, then some vector $v$ satisfies $v^\top Dv\neq0$. Taking $C=vv^\top\succeq0$ gives
\[
0=\Tr(vv^\top D)=v^\top Dv,
\]
a contradiction.
\end{proof}

\begin{lemma}[Linear trace-task summary]
\label{lem:linear_trace_summary}
Let $\Gamma_f,\Gamma_g\in\mathbb S^k$ and define
\[
T_C(f)=\Tr(C\Gamma_f),
\qquad
T_C(g)=\Tr(C\Gamma_g),
\qquad
C\succeq0.
\]
Then $\Gamma_f$ determines all task values $\{T_C(f):C\succeq0\}$. Conversely, if $T_C(f)=T_C(g)$ for all $C\succeq0$, then $\Gamma_f=\Gamma_g$.
\end{lemma}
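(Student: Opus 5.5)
The statement to prove is Lemma~\ref{lem:linear_trace_summary}. The plan is to treat the two directions separately, with the converse reduced to Lemma~\ref{lem:psd_trace_separation} through the linearity of the trace.

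For the forward direction there is nothing to estimate. Each task value $T_C(f)=\Tr(C\Gamma_f)$ is an explicit function of the pair $(C,\Gamma_f)$. Fixing $\Gamma_f$ therefore fixes the whole map $C\mapsto T_C(f)$ on the PSD cone, and so the entire family $\{T_C(f):C\succeq0\}$.

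For the converse, suppose $T_C(f)=T_C(g)$ for every $C\succeq0$, and set $D\coloneqq\Gamma_f-\Gamma_g$. Since both $\Gamma_f$ and $\Gamma_g$ lie in $\mathbb S^k$, so does $D$. By linearity of the trace in its second argument, for every $C\succeq0$,
\[
\Tr(CD)=\Tr(C\Gamma_f)-\Tr(C\Gamma_g)=T_C(f)-T_C(g)=0.
\]
Lemma~\ref{lem:psd_trace_separation} then gives $D=0$, that is, $\Gamma_f=\Gamma_g$.

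The only point needing care is that $D$ is symmetric, since Lemma~\ref{lem:psd_trace_separation} uses the fact that a nonzero symmetric matrix has some $v$ with $v^\top Dv\neq0$. This holds because $\mathbb S^k$ is closed under subtraction. For a non-symmetric matrix the conclusion would fail, because skew-symmetric parts are invisible to rank-one PSD probes. No other obstacle is expected.

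Applied with $\Gamma_f=F_{f,\psi,P,\Sigma}$, which is symmetric, this lemma yields Proposition~\ref{prop:task_summary}.
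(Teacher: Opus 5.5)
Your proof is correct and follows the paper's argument exactly: completeness is immediate from the definition of $T_C$, and minimality follows by applying Lemma~\ref{lem:psd_trace_separation} to $D=\Gamma_f-\Gamma_g$ via linearity of the trace. Your remark that the symmetry of $D$ is essential, since skew-symmetric parts are annihilated by symmetric probes, is correct, and the paper leaves it implicit.
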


\begin{proof}
Completeness is immediate from the definition of $T_C$. For minimality, equality of all task values gives
\[
\Tr(C(\Gamma_f-\Gamma_g))=0
\qquad
\text{for all } C\succeq0.
\]
Lemma~\ref{lem:psd_trace_separation} implies $\Gamma_f=\Gamma_g$.
\end{proof}

\paragraph{Projected Fisher and pullback summaries.}
Proposition~\ref{prop:task_summary} follows from Lemma~\ref{lem:linear_trace_summary} by taking
\[
\Gamma_f=F_{f,\psi,P,\Sigma}.
\]
The isotropic pullback case follows from
\[
F_{f,\psi,P,\sigma^2I}=\sigma^{-2}G_{f,\psi,P}.
\]

\paragraph{Lifted summaries.}
The same argument applies to
\[
\widetilde T_{P,C}(f)=\Tr(C\widetilde A_{f,P})
\]
by taking \(\Gamma_f=\widetilde A_{f,P}\), with \(A=G\) or \(A=F\).

\paragraph{Trace-normalized shape summaries.}
For shape-only summaries, assume $\Tr(G_{f,\psi,P})>0$ and define
\[
\widehat G_{f,\psi,P}
=
\frac{G_{f,\psi,P}}{\Tr(G_{f,\psi,P})}.
\]
For normalized probes $C\succeq0$ with $\Tr(C)=1$,
\[
\widehat T_{\psi,P,C}(f)=\Tr(C\widehat G_{f,\psi,P}).
\]
If $\widehat T_{\psi,P,C}(f)=\widehat T_{\psi,P,C}(g)$ for all such $C$, then by rescaling nonzero PSD matrices, the equality holds for all $C\succeq0$. Lemma~\ref{lem:linear_trace_summary} gives
\[
\widehat G_{f,\psi,P}
=
\widehat G_{g,\psi,P}.
\]

\subsection{Proof of Theorem~\ref{thm:uniform_task_control}}
\label{app:proof_uniform_task_control}

\begin{proof}
Let
\[
M=A^{-1/2}BA^{-1/2}\in\mathbb S_{++}^k.
\]
If $\lambda_i(M)$ are the eigenvalues of $M$, then
\[
\AIRM(A,B)^2
=
\|\log M\|_F^2
=
\sum_{i=1}^k(\log\lambda_i(M))^2.
\]
Thus $|\log\lambda_i(M)|\le d$ for every $i$, where $d=\AIRM(A,B)$. Hence
\[
e^{-d}I\preceq M\preceq e^dI.
\]
Congruence by $A^{1/2}$ gives
\[
e^{-d}A\preceq B\preceq e^dA.
\]
Now let $C\succeq0$. Congruence by $C^{1/2}$ preserves the Loewner order, so
\[
e^{-d}C^{1/2}AC^{1/2}
\preceq
C^{1/2}BC^{1/2}
\preceq
e^dC^{1/2}AC^{1/2}.
\]
Taking traces and using cyclicity gives
\[
e^{-d}\Tr(CA)
\le
\Tr(CB)
\le
e^d\Tr(CA).
\]
\end{proof}

\subsection{Basis invariance of projected task families}
\label{app:basis_invariance}

Let \(P'=PQ\), where \(Q^\top Q=I_k\). Then
\[
F_{f,\psi,P',\Sigma}
=
Q^\top F_{f,\psi,P,\Sigma}Q,
\qquad
G_{f,\psi,P'}
=
Q^\top G_{f,\psi,P}Q.
\]
For the corresponding covariance change \(C_z'=Q^\top C_zQ\), the task value is unchanged:
\[
\Tr(C_z'F_{f,\psi,P',\Sigma})
=
\Tr(C_zF_{f,\psi,P,\Sigma}),
\]
and likewise in the isotropic pullback case,
\[
\Tr(C_z'G_{f,\psi,P'})
=
\Tr(C_zG_{f,\psi,P}).
\]
Thus the projected task family depends on the chosen coordinate subspace, not on the particular orthonormal basis used to represent it.

\subsection{Tight worst-case constant and relation to AIRM}
\label{app:dinfty}

The AIRM-based bound in Theorem~\ref{thm:uniform_task_control} is convenient and symmetric, but in some settings one may want the sharpest possible worst-case log-spectral constant. This leads to the operator-norm quantity
\[
\dinf(A,B)
\coloneqq
\left\|
\log\!\left(A^{-1/2} B A^{-1/2}\right)
\right\|_{\op}.
\]

\begin{theorem}[Tight worst-case constant via \(\dinf\)]
\label{thm:dinf_tight}
Let \(A,B \in \Spp^k\). Then \(\dinf(A,B)\) is the smallest constant \(t \ge 0\) such that
\begin{equation}
e^{-t}A \preceq B \preceq e^{t}A.
\label{eq:dinf_rayleigh}
\end{equation}
\end{theorem}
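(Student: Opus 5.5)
The plan is to reduce everything to the congruence-transformed matrix \(M=A^{-1/2}BA^{-1/2}\in\Spp^k\), exactly as in the proof of Theorem~\ref{thm:uniform_task_control}. Since \(A^{1/2}\) is invertible, congruence by \(A^{-1/2}\) and by \(A^{1/2}\) preserves the Loewner order in both directions. Hence, for any \(t\ge0\), condition \eqref{eq:dinf_rayleigh} is equivalent to
\[
e^{-t}I\preceq M\preceq e^{t}I .
\]
This turns the question into a statement about the spectrum of a single SPD matrix.

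Next I diagonalize \(M=U\operatorname{diag}(\lambda_1,\dots,\lambda_k)U^\top\) with \(\lambda_i>0\). Then \(\log M=U\operatorname{diag}(\log\lambda_i)U^\top\). Since \(\log M\) is symmetric, its operator norm is its spectral radius, so
\[
\dinf(A,B)=\max_i|\log\lambda_i|.
\]
The condition \(e^{-t}I\preceq M\preceq e^{t}I\) is equivalent to \(e^{-t}\le\lambda_i\le e^{t}\) for all \(i\). To see this, use the Rayleigh quotient: \(v^\top Mv/v^\top v\) ranges over \([\lambda_{\min},\lambda_{\max}]\). Taking logarithms, which is monotone, this is equivalent to \(|\log\lambda_i|\le t\) for all \(i\), that is, \(t\ge\max_i|\log\lambda_i|=\dinf(A,B)\).

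Therefore the set of admissible \(t\ge0\) is exactly the closed ray \([\dinf(A,B),\infty)\). Its smallest element is \(\dinf(A,B)\), which proves the claim. Both parts of ``smallest'' follow at once. First, \(t=\dinf\) itself is admissible, so the minimum is attained. Second, any \(t<\dinf\) fails. A witness is the eigenvector \(u_j\) for which \(|\log\lambda_j|=\dinf\): it gives \(u_j^\top Bu'\!\) — more precisely, with \(w=A^{-1/2}u_j\), it gives \(w^\top Bw=\lambda_j\,w^\top Aw\), and this violates one side of \eqref{eq:dinf_rayleigh}. I can also note that \(\dinf\le\AIRM\), since the max-norm is bounded by the \(\ell^2\)-norm of the log-eigenvalues. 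This shows that the constant in Theorem~\ref{thm:uniform_task_control} is in general not tight.

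No step is a real obstacle. The only care needed is to justify that congruence by an invertible matrix is an order isomorphism of \(\mathbb S^k\) in both directions, which is needed for the ``smallest'' (necessity) direction. I would handle this by applying the congruence with \(A^{-1/2}\) and then with its inverse \(A^{1/2}\).
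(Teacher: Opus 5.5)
Your proposal is correct and follows the paper's proof essentially step for step: congruence by $A^{-1/2}$ reduces the sandwich to $e^{-t}I\preceq M\preceq e^{t}I$, which is equivalent to $|\log\lambda_i(M)|\le t$ for all $i$, so the minimal $t$ is $\|\log M\|_{\op}=\dinf(A,B)$. Your explicit witness $w=A^{-1/2}u_j$ for the necessity direction spells out what the paper leaves implicit in its ``if and only if''; before writing it up, delete the stray fragment ``$u_j^\top Bu'$''.
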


\begin{lemma}[\(\dinf\) and AIRM]
\label{lem:dinf_airm}
For all \(A,B \in \Spp^k\),
\[
\dinf(A,B)\le \AIRM(A,B)\le \sqrt{k}\,\dinf(A,B).
\]
\end{lemma}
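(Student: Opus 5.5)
The proof reduces both distances to the eigenvalues of the single matrix
\[
M=A^{-1/2}BA^{-1/2}\in\Spp^k,
\]
exactly as in the proof of Theorem~\ref{thm:uniform_task_control}. Since \(M\) is symmetric positive definite, \(\log M\) is symmetric with eigenvalues \(\ell_i=\log\lambda_i(M)\), \(i=1,\dots,k\). The two norms then become elementary vector norms of \(\ell=(\ell_1,\dots,\ell_k)\):
\[
\AIRM(A,B)=\|\log M\|_F=\Big(\sum_i \ell_i^2\Big)^{1/2}=\|\ell\|_2,
\qquad
\dinf(A,B)=\|\log M\|_{\op}=\max_i|\ell_i|=\|\ell\|_\infty .
\]
The second equality uses that the operator norm of a symmetric matrix is its spectral radius.

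With this reduction, the lemma is the standard comparison \(\|\ell\|_\infty\le\|\ell\|_2\le\sqrt{k}\,\|\ell\|_\infty\) on \(\R^k\).

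For the lower bound, \(\max_i \ell_i^2\le\sum_i \ell_i^2\). This is the same observation already used in the proof of Theorem~\ref{thm:uniform_task_control}, where it gave \(|\log\lambda_i(M)|\le d\).

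For the upper bound, \(\sum_i \ell_i^2\le k\max_i \ell_i^2\). Taking square roots gives \(\AIRM(A,B)\le\sqrt{k}\,\dinf(A,B)\).

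The only step needing care is the identification of both norms with spectral quantities of \(\log M\). Two facts are needed:
\begin{itemize}
\item \(\log M\) is well defined and symmetric. Its eigenvalues are the logarithms of those of \(M\), since \(\log\) acts spectrally on the orthogonal eigendecomposition of \(M\).
\item The Frobenius norm of a symmetric matrix equals the \(\ell_2\) norm of its eigenvalues, and the operator norm equals their \(\ell_\infty\) norm.
\end{itemize}
Both follow directly from the orthogonal diagonalization \(\log M=U\operatorname{diag}(\ell)U^\top\), using the unitary invariance of both norms. No real obstacle is expected beyond stating these facts. One may also remark that both inequalities are tight: the left is attained when \(\ell\) has a single nonzero entry, and the right when all \(|\ell_i|\) are equal.
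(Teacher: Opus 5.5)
Your proof is correct and follows the paper's argument: both identify $\dinf(A,B)$ and $\AIRM(A,B)$ with the $\ell_\infty$ and $\ell_2$ norms of the eigenvalues of the symmetric matrix $\log(A^{-1/2}BA^{-1/2})$, then apply the standard comparison $\|\ell\|_\infty\le\|\ell\|_2\le\sqrt{k}\,\|\ell\|_\infty$ on $\R^k$. Your justification of the spectral identifications via orthogonal diagonalization, and your tightness remark, are correct additions that the paper does not spell out.
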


\subsection{Proof of Theorem~\ref{thm:dinf_tight}}
\label{app:proof:dinf_tight}

\begin{proof}
Write
\[
C \coloneqq A^{-1/2} B A^{-1/2} \in \Spp^k .
\]
Then the sandwich
\[
e^{-t}A \preceq B \preceq e^{t}A
\]
is equivalent, after congruence by \(A^{-1/2}\), to
\[
e^{-t}I \preceq C \preceq e^{t}I .
\]
Because \(C\) is symmetric positive definite, this holds if and only if every eigenvalue of \(C\) lies in the interval \([e^{-t},e^{t}]\), that is,
\[
e^{-t} \le \lambda_i(C) \le e^{t}
\qquad\text{for all } i .
\]
Equivalently,
\[
\bigl|\log \lambda_i(C)\bigr| \le t
\qquad\text{for all } i .
\]
Since \(\log C\) is symmetric, the smallest such \(t\) is precisely the largest absolute eigenvalue of \(\log C\), namely
\[
t = \|\log C\|_{\op} = \dinf(A,B).
\]
This proves the claim.
\end{proof}

\begin{theorem}[Variational characterization of \(\dinf\)]
\label{thm:dinf_variational}
For all \(A,B\in\Spp^k\),
\begin{equation}
\dinf(A,B)
=
\sup_{C\succeq 0,\ C\neq 0}
\left|
\log \frac{\Tr(CB)}{\Tr(CA)}
\right|.
\label{eq:dinf_variational}
\end{equation}
Equivalently,
\begin{equation}
\dinf(A,B)
=
\sup_{C\succeq 0,\ \Tr(CA)=1}
\left|
\log \Tr(CB)
\right|.
\label{eq:dinf_variational_normalized}
\end{equation}
\end{theorem}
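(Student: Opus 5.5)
The argument proves two inequalities: first that the supremum is at most \(\dinf(A,B)\), using Theorem~\ref{thm:dinf_tight}, and then that it is attained by a rank-one probe built from an extremal eigenvector. Throughout, write \(t=\dinf(A,B)\) and \(M=A^{-1/2}BA^{-1/2}\). For any \(C\succeq0\) with \(C\neq0\) we have \(\Tr(CA)=\Tr(A^{1/2}CA^{1/2})>0\), because \(A^{1/2}CA^{1/2}\) is a nonzero PSD matrix. By the same argument \(\Tr(CB)>0\), so the logarithm in \eqref{eq:dinf_variational} is well defined.

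\emph{Upper bound.} Theorem~\ref{thm:dinf_tight} gives \(e^{-t}A\preceq B\preceq e^{t}A\). Conjugating by \(C^{1/2}\) and taking traces, exactly as in the proof of Theorem~\ref{thm:uniform_task_control}, yields
\[
e^{-t}\Tr(CA)\le\Tr(CB)\le e^{t}\Tr(CA).
\]
Dividing by \(\Tr(CA)>0\) and taking logarithms gives \(\left|\log\frac{\Tr(CB)}{\Tr(CA)}\right|\le t\) for every admissible \(C\). Hence the supremum is at most \(t\).

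\emph{Attainment.} Since \(\log M\) is symmetric, \(t=|\log\lambda^*|\) for some eigenvalue \(\lambda^*\) of \(M\), namely whichever of \(\lambda_{\max}(M)\) or \(\lambda_{\min}(M)\) has the larger absolute logarithm. Let \(u\) be a unit eigenvector with \(Mu=\lambda^* u\), and set \(w=A^{-1/2}u\) and \(C=ww^\top\succeq0\), which is nonzero. Then
\[
\Tr(CA)=w^\top Aw=u^\top u=1,
\qquad
\Tr(CB)=w^\top Bw=u^\top Mu=\lambda^*.
\]
So \(\left|\log\frac{\Tr(CB)}{\Tr(CA)}\right|=|\log\lambda^*|=t\), the supremum is attained, and it equals \(\dinf(A,B)\).

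\emph{Normalized form.} The ratio \(\Tr(CB)/\Tr(CA)\) is unchanged when \(C\) is replaced by \(cC\) for any \(c>0\). Any nonzero \(C\succeq0\) can therefore be rescaled to satisfy \(\Tr(CA)=1\), and for such \(C\) the ratio is simply \(\Tr(CB)\). This gives \eqref{eq:dinf_variational_normalized}.

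The main care point is the attainment step. The witness \(C\) must be built from an eigenvector of \(M\), mapped back through \(A^{-1/2}\), and not from an eigenvector of \(B\) or \(A\) alone. We also need to choose the extremal eigenvalue, top or bottom, according to which one realizes the operator norm of \(\log M\).
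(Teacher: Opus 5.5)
Your proof is correct and essentially matches the paper's: your attainment witness $C=A^{-1/2}uu^\top A^{-1/2}$ from an extremal eigenvector of $M$ is the paper's witness, and your upper bound via the sandwich of Theorem~\ref{thm:dinf_tight} is a modular repackaging of the paper's direct step, which substitutes $D=A^{1/2}CA^{1/2}$ and bounds $\Tr(DM)/\Tr(D)$, a convex combination of eigenvalues of $M$, between $\lambda_{\min}(M)$ and $\lambda_{\max}(M)$. Your explicit check that $\Tr(CA)$ and $\Tr(CB)$ are strictly positive, so the logarithm is defined, is a small point the paper leaves implicit.
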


\subsection{Proof of Theorem~\ref{thm:dinf_variational}}
\label{app:proof:dinf_variational}

\begin{proof}
Let
\[
M\coloneqq A^{-1/2}BA^{-1/2}\in \Spp^k.
\]
For any \(C\succeq 0\) with \(C\neq 0\), define
\[
D\coloneqq A^{1/2} C A^{1/2}\succeq 0.
\]
Then
\[
\Tr(CB)=\Tr(DM),
\qquad
\Tr(CA)=\Tr(D).
\]
Therefore
\begin{equation}
\frac{\Tr(CB)}{\Tr(CA)}=\frac{\Tr(DM)}{\Tr(D)}.
\label{eq:ratio_DM}
\end{equation}

Let the eigenvalues of \(M\) be
\[
\lambda_{\min}(M)\le \lambda_1(M)\le \cdots \le \lambda_{\max}(M).
\]
Since \(D\succeq 0\), the quantity \(\Tr(DM)/\Tr(D)\) is a convex combination of the eigenvalues of \(M\). Hence
\[
\lambda_{\min}(M)
\le
\frac{\Tr(DM)}{\Tr(D)}
\le
\lambda_{\max}(M).
\]
Applying \(\log\) gives
\[
\log \lambda_{\min}(M)
\le
\log \frac{\Tr(CB)}{\Tr(CA)}
\le
\log \lambda_{\max}(M).
\]
Therefore
\[
\left|
\log \frac{\Tr(CB)}{\Tr(CA)}
\right|
\le
\max\!\bigl\{|\log \lambda_{\min}(M)|,\ |\log \lambda_{\max}(M)|\bigr\}
=
\|\log M\|_{\op}
=
\dinf(A,B).
\]

To show equality, let \(u_{\max}\) be a unit eigenvector of \(M\) with eigenvalue \(\lambda_{\max}(M)\). Set
\[
D=u_{\max}u_{\max}^\top,
\qquad
C=A^{-1/2} D A^{-1/2}.
\]
Then \(C\succeq 0\), \(C\neq 0\), and
\[
\frac{\Tr(CB)}{\Tr(CA)}
=
\frac{\Tr(DM)}{\Tr(D)}
=
\lambda_{\max}(M).
\]
Thus
\[
\left|
\log \frac{\Tr(CB)}{\Tr(CA)}
\right|
=
|\log \lambda_{\max}(M)|.
\]
Likewise, choosing an eigenvector for \(\lambda_{\min}(M)\) attains
\[
|\log \lambda_{\min}(M)|.
\]
Taking the larger of the two proves \eqref{eq:dinf_variational}.

The normalized form \eqref{eq:dinf_variational_normalized} follows by rescaling \(C\) so that \(\Tr(CA)=1\).
\end{proof}

\subsection{Proof of Lemma~\ref{lem:dinf_airm}}
\label{app:proof:dinf_airm}

\begin{proof}
Let
\[
L \coloneqq \log\!\left(A^{-1/2} B A^{-1/2}\right).
\]
Then, by definition,
\[
\dinf(A,B)=\|L\|_{\op},
\qquad
\AIRM(A,B)=\|L\|_F.
\]
Since \(L\) is symmetric, \(\|L\|_{\op}\) is the \(\ell_\infty\) norm of its eigenvalues in absolute value, whereas \(\|L\|_F\) is the corresponding \(\ell_2\) norm. The standard relation between these norms yields
\[
\|L\|_{\op} \le \|L\|_F \le \sqrt{k}\,\|L\|_{\op},
\]
and hence
\[
\dinf(A,B)\le \AIRM(A,B)\le \sqrt{k}\,\dinf(A,B).
\]
\end{proof}

\subsection{Immunity of Jacobian-based alignment to subset translation}
\label{app:subset_translation}

\begin{lemma}[Invariance to locally constant representation-space shifts]
\label{lem:subset_translation}
Fix a finite dataset \(\{x_i\}_{i=1}^n\) and a differentiable representation map \(f:\R^d\to\R^m\). Suppose there are open neighborhoods \(U_i\) of \(x_i\) and constants \(c_i\in\R^m\) such that
\[
\tilde f(x)=f(x)+c_i
\qquad
\text{for all } x\in U_i .
\]
Then, at each dataset point,
\begin{equation}
J_{\tilde f}(x_i)=J_f(x_i),
\qquad
M_{\tilde f}(x_i)=M_f(x_i).
\end{equation}
Consequently, if \(G_f\) is computed by averaging over the empirical dataset, then
\begin{equation}
G_{\tilde f}=G_f,
\end{equation}
and any score depending on \(f\) only through \(G_f\) is unchanged.
\end{lemma}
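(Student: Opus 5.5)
The plan is to differentiate the identity $\tilde f=f+c_i$ on the open set $U_i$, then propagate the pointwise equality of pullback metrics through the empirical average. The argument has three steps.

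\textbf{Step 1 (Jacobians).} Fix $i$. Since $U_i$ is an open neighborhood of $x_i$ and $\tilde f(x)=f(x)+c_i$ for every $x\in U_i$, the derivative of $\tilde f$ at $x_i$ depends only on the values of $\tilde f$ near $x_i$. For any direction $v\in\R^d$ and sufficiently small $\epsilon$, the point $x_i+\epsilon v$ lies in $U_i$. Hence
\[
\tilde f(x_i+\epsilon v)-\tilde f(x_i)=f(x_i+\epsilon v)-f(x_i),
\]
because the constant $c_i$ cancels. This shows that $\tilde f$ is differentiable at $x_i$. Dividing by $\epsilon$ and letting $\epsilon\to0$ gives $J_{\tilde f}(x_i)v=J_f(x_i)v$ for all $v$, so $J_{\tilde f}(x_i)=J_f(x_i)$. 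Equivalently, the derivative of a constant map vanishes and differentiation is linear.

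\textbf{Step 2 (pullback metrics).} By the definition in Eq.~\eqref{eq:pullback_metric}, $M_{\tilde f}(x_i)=J_{\tilde f}(x_i)^\top J_{\tilde f}(x_i)=J_f(x_i)^\top J_f(x_i)=M_f(x_i)$. The same conclusion holds for the projected, reparameterized versions. If $\psi(s_i)=x_i$, continuity of $\psi$ gives a neighborhood of $s_i$ mapped into $U_i$. Therefore $h_{\tilde f}=h_f+c_i$ near $s_i$, and $J_{h_{\tilde f}}(s_i)P=J_{h_f}(s_i)P$. It follows that each summand $(J_{h}(s_i)P)^\top \Sigma^{-1}(J_h(s_i)P)$ of $F$, and each summand of $G$, is unchanged.

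\textbf{Step 3 (averages and scores).} The empirical summary is
\[
G_f=\frac1n\sum_{i=1}^n M_f(x_i),
\]
or its projected analogue. It is a finite average of quantities that agree termwise, so $G_{\tilde f}=G_f$. The SPD lift in Eq.~\eqref{eq:spd_lift} depends only on $G$, and so do $d_{\sras,P}$ and $\sras_P$ in Eq.~\eqref{eq:sras}. Consequently all of these are unchanged, as is any score that is a function of $G_f$.

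The only delicate point is Step 1. It requires that the constants $c_i$ be locally constant on an \emph{open} neighborhood, and not merely that the two maps agree at the sample points. Differentiation is a local operation, so this neighborhood hypothesis is exactly what makes the argument go through. I would state this explicitly and note the contrast with the translation construction of \cite{davari2022reliability}, where the shift is applied to the point cloud itself. The mean-centering step in that construction is a global constant and is covered by the same argument with $c_i\equiv c$.
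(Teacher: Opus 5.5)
Your proposal is correct and follows the same route as the paper's proof. The shift is constant on the open set $U_i$, so its derivative vanishes there, which gives $J_{\tilde f}(x_i)=J_f(x_i)$, then equal pullback metrics, equal empirical averages $G$, and unchanged lifted \sras\ scores. Your additions (the difference-quotient justification, best phrased as $\tilde f(x)-\tilde f(x_i)=f(x)-f(x_i)$ for all $x\in U_i$ rather than only along rays so that full differentiability follows, and the extension to projected and reparameterized $G$ and $F$ via continuity of $\psi$) go slightly beyond the paper's write-up without changing the argument.
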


\begin{proof}
On each neighborhood \(U_i\), the added term \(c_i\) is constant with respect to the input coordinates. Hence its derivative is zero on \(U_i\), and in particular
\[
J_{\tilde f}(x_i)=J_f(x_i).
\]
Therefore
\[
M_{\tilde f}(x_i)
=
J_{\tilde f}(x_i)^\top J_{\tilde f}(x_i)
=
J_f(x_i)^\top J_f(x_i)
=
M_f(x_i).
\]
Averaging over the empirical distribution gives \(G_{\tilde f}=G_f\). Any score constructed only from \(G_f\), including \(\sras\) after the SPD lift, is therefore unchanged.
\end{proof}

\begin{remark}
This invariance applies to post-hoc representation-space shifts that are locally constant with respect to input coordinates on neighborhoods of the evaluated datapoints. If the added shift varies smoothly with \(x\), then its derivative need not vanish and the sensitivity signature may change.
\end{remark}

\subsection{Structural diagnostics for pointwise spectral-ratio baselines}
\label{app:sr_diagnosis}

For the pointwise MSA baseline used in the layer-matching appendix, we write the local spectral-ratio distance as
\begin{equation}
d_{\mathrm{SR}}(A,B)
\coloneqq
1-\sqrt{\frac{\lambda_{\min}}{\lambda_{\max}}},
\label{eq:sr_distance_appendix}
\end{equation}
where \(\lambda_{\min}\) and \(\lambda_{\max}\) are the smallest and largest generalized eigenvalues of the pair \((B,A)\), equivalently the smallest and largest eigenvalues of \(A^{-1/2}BA^{-1/2}\).

\begin{proposition}[Conformal blindness of spectral-ratio pointwise similarity]
\label{prop:sr_conformal_blindness}
Let \(A,B\in \Spp^k\). If
\[
B=cA
\qquad
\text{for some } c>0,
\]
then
\[
d_{\mathrm{SR}}(A,B)=0.
\]
\end{proposition}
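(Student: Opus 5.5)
The plan is a direct computation of the generalized eigenvalues of the pair \((B,A)\) when \(B=cA\), followed by substitution into Eq.~\eqref{eq:sr_distance_appendix}.

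First, I form the whitened matrix used in the definition of \(d_{\mathrm{SR}}\). Since \(A\in\Spp^k\), the square root \(A^{-1/2}\) exists and is symmetric positive definite. Substituting \(B=cA\) gives
\[
A^{-1/2}BA^{-1/2}=cA^{-1/2}AA^{-1/2}=cI_k.
\]
This uses only \(A^{-1/2}A^{1/2}=I_k\) together with \(A=A^{1/2}A^{1/2}\).

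Next, I read off the spectrum. The matrix \(cI_k\) has the single eigenvalue \(c\) with multiplicity \(k\), so \(\lambda_{\min}=\lambda_{\max}=c\). Because \(c>0\), the ratio is well defined and \(\lambda_{\min}/\lambda_{\max}=1\). Therefore
\[
d_{\mathrm{SR}}(A,B)=1-\sqrt{1}=0.
\]

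The only step that needs care is the identification of the generalized eigenvalues of \((B,A)\) with the eigenvalues of \(A^{-1/2}BA^{-1/2}\), and the paper states this equivalence in the definition itself. For a self-contained check, note that \(Bv=\lambda Av\) becomes \(cAv=\lambda Av\). Since \(A\) is invertible, this forces \(\lambda=c\). I expect no real obstacle.

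As a closing remark, I would contrast this with AIRM. In the same case, \(\AIRM(A,cA)=\sqrt{k}\,|\log c|\), which is nonzero for \(c\neq1\). This is exactly the gain information that the spectral-ratio distance discards, and it is the point of calling the property \emph{conformal blindness}.
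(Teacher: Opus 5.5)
Your proposal is correct and takes the same approach as the paper's proof: whiten to get $A^{-1/2}(cA)A^{-1/2}=cI$, read off $\lambda_{\min}=\lambda_{\max}=c$, and substitute into the spectral-ratio formula. Your extra check via $Bv=\lambda Av$ and the contrast $\AIRM(A,cA)=\sqrt{k}\,|\log c|$ are both correct, though the proof does not need them.
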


\begin{proof}
If \(B=cA\), then
\[
A^{-1/2}BA^{-1/2}
=
A^{-1/2}(cA)A^{-1/2}
=
cI.
\]
Hence all generalized eigenvalues are equal:
\[
\lambda_{\min}=\lambda_{\max}=c.
\]
Substituting into \eqref{eq:sr_distance_appendix} gives
\[
d_{\mathrm{SR}}(A,B)
=
1-\sqrt{\frac{c}{c}}
=
0.
\]
\end{proof}

\begin{proposition}[Extreme-spectrum collapse]
\label{prop:sr_extreme_spectrum}
Let \(A,B,\widetilde A,\widetilde B\in \Spp^k\). If
\[
\kappa\!\left(A^{-1/2}BA^{-1/2}\right)
=
\kappa\!\left(\widetilde A^{-1/2}\widetilde B\widetilde A^{-1/2}\right),
\]
where
\[
\kappa(M)\coloneqq \frac{\lambda_{\max}(M)}{\lambda_{\min}(M)},
\]
then
\[
d_{\mathrm{SR}}(A,B)=d_{\mathrm{SR}}(\widetilde A,\widetilde B).
\]
In particular, \(d_{\mathrm{SR}}\) depends only on the extreme generalized eigenvalues of the relative metric and ignores the interior spectrum.
\end{proposition}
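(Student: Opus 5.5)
The argument is a direct computation from the definition of \(d_{\mathrm{SR}}\) in Eq.~\eqref{eq:sr_distance_appendix}. The plan is to show that \(d_{\mathrm{SR}}(A,B)\) is a function of the single scalar \(\kappa(A^{-1/2}BA^{-1/2})\) alone. Equal condition numbers then force equal distances.

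First, write \(M=A^{-1/2}BA^{-1/2}\in\Spp^k\). As noted after Eq.~\eqref{eq:sr_distance_appendix}, the extreme generalized eigenvalues of the pair \((B,A)\) are exactly \(\lambda_{\min}(M)\) and \(\lambda_{\max}(M)\). Both are strictly positive because \(M\) is obtained from \(B\succ0\) by congruence, so \(\kappa(M)\) is well defined and lies in \([1,\infty)\). Substituting into the definition gives
\[
d_{\mathrm{SR}}(A,B)=1-\sqrt{\frac{\lambda_{\min}(M)}{\lambda_{\max}(M)}}=1-\kappa(M)^{-1/2}.
\]

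Next, define \(\phi(t)=1-t^{-1/2}\) on \(t\ge1\). The display above reads \(d_{\mathrm{SR}}(A,B)=\phi(\kappa(M))\), and likewise \(d_{\mathrm{SR}}(\widetilde A,\widetilde B)=\phi(\kappa(\widetilde M))\) with \(\widetilde M=\widetilde A^{-1/2}\widetilde B\widetilde A^{-1/2}\). The hypothesis \(\kappa(M)=\kappa(\widetilde M)\) then gives equality of the distances immediately.

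For the ``in particular'' clause, observe that \(\phi\circ\kappa\) reads off only \(\lambda_{\min}\) and \(\lambda_{\max}\) of the relative metric. Any two relative metrics whose extreme eigenvalue ratio agrees therefore receive the same distance, however their interior eigenvalues \(\lambda_2,\dots,\lambda_{k-1}\) differ. To make this concrete, I would exhibit, for \(k\ge3\), \(A=\widetilde A=I\) and diagonal \(B,\widetilde B\) with the same extreme entries but different middle entries. These give equal \(d_{\mathrm{SR}}\) but different AIRM values, contrasting with \(\AIRM\), which by its definition in Eq.~\eqref{eq:airm} aggregates all \(\log\lambda_i\). Proposition~\ref{prop:sr_conformal_blindness} is the special case \(\kappa=1\).

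There is no real obstacle. The only point needing care is interpretive: the generalized eigenvalues of \((B,A)\) must be taken as those of \(M\), equivalently of \(A^{-1}B\), which is similar to \(M\). The distance must also be read as depending on the pair only through those eigenvalues, consistent with the convention stated with Eq.~\eqref{eq:sr_distance_appendix}.
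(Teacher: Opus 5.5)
Your proposal is correct and matches the paper's proof: both rewrite $d_{\mathrm{SR}}(A,B)=1-\kappa(A^{-1/2}BA^{-1/2})^{-1/2}$ and conclude that equal relative condition numbers force equal distances. Your positivity remark and the diagonal example with $A=\widetilde A=I$ illustrating the ``in particular'' clause are sound additions that the paper omits.
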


\begin{proof}
By definition,
\[
d_{\mathrm{SR}}(A,B)
=
1-\sqrt{\frac{\lambda_{\min}}{\lambda_{\max}}}
=
1-\kappa\!\left(A^{-1/2}BA^{-1/2}\right)^{-1/2}.
\]
Thus equal relative condition number implies equal spectral-ratio distance.
\end{proof}

\subsection{Proof of Proposition~\ref{prop:shared_contrast_directions}}
\label{app:proof:shared_contrast_directions}

\begin{proof}
For any unit vector \(v\),
\[
v^\top \Delta G_y v
=
v^\top \overline G^{(y)}_{A,\psi,P} v
-
v^\top \overline G^{(y)}_{B,\psi,P} v.
\]
For a model \(m\), with \(h_m=f_m\circ\psi\),
\[
v^\top G^{(y)}_{m,\psi,P} v
=
\E_{s\sim\nu_y}
\!\left[
\|J_{h_m}(s)Pv\|_2^2
\right].
\]
Thus \(v^\top \Delta G_y v\) is the between-group contrast in expected local quadratic sensitivity along the shared coordinate-family direction \(v\).

The one-direction statements follow from the Rayleigh--Ritz variational characterization of eigenvalues \citep[Ch.~4]{horn2012matrix}:
\[
\max_{\|v\|_2=1} v^\top \Delta G_y v = \lambda_{\max}(\Delta G_y),
\qquad
\min_{\|v\|_2=1} v^\top \Delta G_y v = \lambda_{\min}(\Delta G_y).
\]
The \(r\)-dimensional statements follow from Ky Fan's variational principle \citep[Ch.~4]{horn2012matrix}:
\[
\max_{U^\top U=I_r}\Tr(U^\top \Delta G_y U)
=
\sum_{i=1}^r \lambda_i^\downarrow(\Delta G_y),
\]
with the minimum obtained by the \(r\) smallest eigenvalues. The same argument applies to \(\Delta\widehat G_y\).
\end{proof}

Exactly the same proposition applies if \(\Delta G_y\) is replaced by the shape-only contrast
\[
\Delta \widehat G_y
\coloneqq
\overline{\widehat G}^{(y)}_{A,P_K}-\overline{\widehat G}^{(y)}_{B,P_K},
\]
which isolates between-group differences in directional allocation independent of overall gain.

\section{The recurrence abstraction}
\label{app:recurrence}

The main text treats a representation as a differentiable map \(f:\R^d\to\R^m\). Recurrent systems fit the same abstraction by taking \(f\) to be either the unrolled state map at a fixed readout time or, when the dynamics converge, the equilibrium map.

\subsection{From recurrence to a representation map}

Consider a recurrent state \(r(t)\in\R^N\) driven by a constant stimulus \(x\in\R^d\) through an input drive \(u(x)\in\R^N\), with dynamics
\begin{equation}
\tau\frac{dr}{dt}
=
-r(t)+\sigma\!\big(Wr(t)+u(x)\big),
\label{eq:rnn_dynamics}
\end{equation}
where \(W\in\R^{N\times N}\) and \(\sigma\) is applied elementwise. Two representation maps are immediate:
(i) a finite-time map \(f_t(x)\coloneqq r(t;x)\), and
(ii) when \eqref{eq:rnn_dynamics} converges, an equilibrium map \(f(x)\coloneqq r^*(x)\), where \(r^*(x)\) satisfies
\begin{equation}
r^*(x)=\sigma\!\big(Wr^*(x)+u(x)\big).
\label{eq:fixed_point}
\end{equation}

\subsection{Geometric sensitivity via Jacobians}

In the unrolled case, \(J_{f_t}(x)\) is obtained by differentiating through time. In the equilibrium case, \(J_f(x)\) follows from implicit differentiation. Define
\[
F(r,x)\coloneqq r-\sigma(Wr+u(x)).
\]
At equilibrium, \(F(r^*(x),x)=0\). Differentiating with respect to \(x\) gives
\[
\frac{\partial F}{\partial r}\Big|_{(r^*,x)}\frac{dr^*}{dx}
+
\frac{\partial F}{\partial x}\Big|_{(r^*,x)}=0.
\]
Let
\[
D(x)\coloneqq \mathrm{diag}\!\big(\sigma'(Wr^*(x)+u(x))\big).
\]
Then
\[
\frac{\partial F}{\partial r}=\Id-D(x)W,
\qquad
\frac{\partial F}{\partial x}=-D(x)\frac{du}{dx}.
\]
Assuming \(\Id-D(x)W\) is invertible, the implicit function theorem yields
\begin{equation}
J_f(x)
=
\frac{dr^*}{dx}
=
\big(\Id-D(x)W\big)^{-1}D(x)\,\frac{du}{dx}.
\label{eq:implicit_jacobian}
\end{equation}

\subsection{Compatibility with the main-text geometry}

Once a recurrent network is cast as a differentiable map \(f\), the main-text definitions apply verbatim:
\[
M_f(x)=J_f(x)^\top J_f(x),
\qquad
G_f=\E_x[J_f(x)^\top J_f(x)].
\]
In this sense, pulling back to input space is not a change of object, but a coordinate description of local sensitivity of the same representation map.

\section{Additional experimental details}
\label{app:exp_setup}

\subsection{Model banks}
\label{app:model_banks}

We use three model banks, chosen to probe different kinds of representational variation.

\paragraph{Tiny10 bank.}
We trained 10 independent Tiny10 CNNs on CIFAR-10\cite{krizhevsky2009learning} dataset. This bank is intentionally small and homogeneous: all models share the same architecture and dataset, so differences arise only from optimization and random initialization. We use this bank primarily for sanity-check, especially layer-identity recovery and other analyses where a controlled setting is desirable before moving to larger or more heterogeneous models.

\paragraph{Standard-training banks.}
To study architectural variation under standard supervised training, we trained ResNet-18\cite{he2016deep} and a small vision transformer (ViT)\cite{dosovitskiy2020image} on CIFAR-10, using 10 seeds for each architecture--dataset pair. These banks are used for comparisons in which the dataset and broad supervised-training setting are fixed, while the compared model families use architecture-appropriate standard training recipes. The resulting variation therefore reflects trained model family, including architecture and its standard optimization recipe, together with seed-to-seed variability.

\paragraph{Adversarially trained banks.}
To study how local sensitivity geometry changes under different robustness objectives while holding architecture fixed, we trained ResNet-18 models on CIFAR-10 datasets with three adversarial-training procedures: PGD adversarial training, TRADES, and MART. For each dataset--method pair we trained 5 seeds. These banks are used for training-regime comparisons, where the architecture is fixed and the main source of variation is the learning objective.

Thus, taken together, the full suite lets us study local geometric sensitivity in three complementary regimes including within-architecture seed variability, across-architecture variation under standard training, and across-training-regime variation under fixed architecture.

\begin{table}[t]
\centering
\caption{Model banks used in the experiments.}
\label{tab:model_banks}
\begin{tabular}{llllc}
\toprule
Bank & Dataset & Model family & Training regime & Seeds \\
\midrule
Tiny10 & CIFAR-10 & Tiny10 CNN & standard training & 10 \\
Standard & CIFAR-10 & ResNet-18, small ViT & standard training & 10 each \\
Robust & CIFAR-10 & ResNet-18 & PGD / TRADES / MART & 5 each \\
\bottomrule
\end{tabular}
\end{table}

\subsection{Training details}
\label{app:training_details}

All models were trained from scratch with independent random seeds. Table~\ref{tab:training_hparams} summarizes the optimizer and schedule settings used for the model banks. Checkpoints were selected consistently within each bank and then reused unchanged for all activation-based and sensitivity-based analyses.

\begin{table}[t]
\centering
\scriptsize
\setlength{\tabcolsep}{3pt}
\renewcommand{\arraystretch}{1.10}
\caption{\textbf{Training hyperparameters for computational model banks.} The standard ResNet-18 and small-ViT banks use architecture-appropriate supervised training recipes; robust-training comparisons hold architecture fixed and vary only the training objective.}
\label{tab:training_hparams}
\resizebox{\linewidth}{!}{%
\begin{tabular}{@{}llllrrrrrrl@{}}
\toprule
Bank & Dataset & Model & Optimizer & Epochs & Batch & LR & WD & Warmup & Label smooth. & Schedule / selection \\
\midrule
Tiny10 & CIFAR-10 & Tiny10 CNN & AdamW & 100 & 128 & \(10^{-3}\) & \(10^{-4}\) & 5 & 0.00 & warmup + cosine; consistent saved checkpoint \\
Standard & CIFAR-10 & ResNet-18 & SGD/Nesterov & 200 & 512 & 0.10 & \(5{\times}10^{-4}\) & 5 & 0.10 & warmup + cosine to \(10^{-2}\) LR; early stop after min 60 \\
Standard & CIFAR-10 & small ViT & AdamW & 300 & 512 & \(3{\times}10^{-4}\) & \(5{\times}10^{-2}\) & 20 & 0.10 & warmup + cosine to \(10^{-2}\) LR; early stop after min 100 \\
Robust & CIFAR-10 & ResNet-18 PGD & SGD/momentum & 110 & 256 & 0.10 & \(5{\times}10^{-4}\) & -- & -- & step decay at 55/75/90 epochs; best robust checkpoint \\
Robust & CIFAR-10 & ResNet-18 TRADES & SGD/momentum & 110 & 256 & 0.10 & \(5{\times}10^{-4}\) & -- & -- & step decay at 55/75/90 epochs; best robust checkpoint \\
Robust & CIFAR-10 & ResNet-18 MART & SGD/momentum & 110 & 256 & 0.10 & \(5{\times}10^{-4}\) & -- & -- & step decay at 55/75/90 epochs; best robust checkpoint \\
\bottomrule
\end{tabular}%
}
\end{table}

\paragraph{Standard-model training.}
For the standard ResNet-18 models, we use SGD with Nesterov momentum \(0.9\), decoupled weight decay on non-bias/non-normalization parameters, label smoothing \(0.10\), and a warmup-cosine learning-rate schedule. For small ViT, we use AdamW with gradient clipping at global norm \(1.0\), label smoothing \(0.10\), and the same warmup-cosine schedule form. The cosine schedule decays to \(10^{-2}\) times the base learning rate. CIFAR training uses random crop/flip augmentation and dataset-specific channel normalization. Early stopping is allowed only after the minimum epoch shown in the training code and uses held-out evaluation accuracy.

\paragraph{Tiny10 training.}
Tiny10 models use AdamW with batch size \(128\), \(100\) epochs, base learning rate \(10^{-3}\), weight decay \(10^{-4}\), and a \(5\)-epoch linear warmup followed by cosine decay to \(1\%\) of the base learning rate. CIFAR-10 images are scaled to \([-1,1]\). Training augmentation uses reflection padding by \(4\) pixels, random crop back to \(32\times32\), and random horizontal flip.

\paragraph{Robust ResNet-18 training.}
Robust models use \(\ell_\infty\) adversarial training with radius \(\epsilon=8/255\), step size \(2/255\), \(10\) inner-loop PGD steps for training, and \(20\) PGD steps for robust evaluation. PGD models minimize adversarial cross-entropy. TRADES uses KL-based adversarial examples and \(\beta=6.0\). MART uses the MART loss with \(\beta=5.0\). All robust models use SGD with momentum \(0.9\), weight decay \(5\times10^{-4}\), batch size \(256\), \(110\) epochs, and learning-rate drops by a factor \(0.1\) at epochs \(55\), \(75\), and \(90\). Robust checkpoints are selected by held-out robust accuracy.

\subsection{Layer definitions and extracted representations}
\label{app:layer_defs}

For each trained network, we treat a \emph{layer representation} as the input-to-layer map \(\Phi_n\) introduced in Section~\ref{sec:theory}. In practice, this means that for a given checkpoint we extract both activations \(\Phi_n(x)\) and local sensitivity information through Jacobian-vector products of the same map.

\paragraph{Tiny10 CNNs.}
For Tiny10, candidate layers correspond to the post-nonlinearity outputs of the convolutional stages and the corresponding final hidden layer representation used before the classifier. This bank is used when we want a relatively fine-grained but still manageable sequence of convolutional representations.

\paragraph{ResNet-18.}
For ResNet-18, we use the stem representation, the outputs of the residual blocks, and the pooled pre-logit representation. This yields a sequence of intermediate maps that naturally tracks the model from early local feature extraction to the final global representation.

\paragraph{Small ViT.}
For the small ViT, we use the patch-embedding representation, the token representation before the encoder stack, the outputs of successive transformer blocks, and the pre-logit class-token representation. This gives an architecture-matched sequence of intermediate transformer representations without requiring a forced one-to-one correspondence with convolutional layers.

Throughout, whenever we compare two layers, the compared objects are the corresponding saved representation maps and their associated dataset-level sensitivity summaries, rather than individual activation vectors from a single image.

\subsection{Evaluation protocol and aggregation}
\label{app:eval_protocol}

Unless stated otherwise, comparisons are made between independently trained models. For within-bank analyses, we aggregate over seed pairs so that reported values reflect stable trends rather than idiosyncrasies of a single training run. For layer-identification experiments, each candidate layer in one model is compared against the candidate layers in another model, and prediction is counted as correct when the matched layer index agrees with the true layer identity. For cross-model disagreement analyses, distances are computed from the relevant representation layers and then related to independently measured perturbation-task disagreement across the same model pairs.

This protocol is used to separate the following questions. Whether the measure is stable across random initializations, whether it captures architecture-level differences beyond seed noise, and whether it is sensitive to systematic changes in the training objective such as adversarial training.

\subsection{Compute resources}
\label{app:compute_resources}

The computational-model experiments were run in Google Colab TPU v6e High-RAM environments using Python/JAX workflows. The Tiny10 layer-matching experiments, activation baselines, class-conditional probe analyses, and robust-training probe analyses can be reproduced on a comparable modern accelerator runtime. The main memory requirements come from storing activations, projected Jacobian-vector-product summaries, and pairwise layer/model similarity matrices. The sweeps over model pairs, layers, task-family dimensions \(K\), probe counts, and training-regime pairs are independently parallelizable, and in our runs were executed as separate accelerator jobs rather than as one monolithic run.

The Allen static-gratings analysis was run locally on a CPU workstation/server. The main requirements are memory for matched-count response matrices, condition means, pooled covariances, and per-experiment \(3\times3\) Fisher/pullback summaries. A workstation with 16--32 CPU cores and 64--128GB RAM is sufficient for reproducing the matched-count area and VISp-depth analyses; more cores primarily speed up independent experiment, subsample, and family-restriction loops. The biological pipeline is CPU-parallelizable across experiments and matched subsamples and does not require a GPU.

\section{Additional Experiments.}

\subsection{Layer matching sanity check and task-family dependence}
\label{app:layer_matching}

This appendix provides experimental details and robustness analyses for the layer-matching sanity check reported in Figure~\ref{fig:layer_identification_main}. The goal of this experiment is not to establish a new benchmark in its own right, but to verify that the proposed geometry produces a sensible layer-identity signal in the setting introduced by \citet{kornblith2019cka}, corresponding layers in independently trained networks should be most similar to one another. In our framework, this sanity check is performed relative to explicit local perturbation task families.

\paragraph{Experimental setup.}
We train $10$ independently initialized Tiny10 convolutional networks on CIFAR-10 and compare the eight intermediate ReLU layers across all unordered model pairs. For each pair of models $(a,b)$ and each task-family dimension $K$, we form a layer-similarity matrix
\[
S^{(a,b)}_{K}\in\mathbb{R}^{8\times 8},
\qquad
S^{(a,b)}_{K}(i,j)
=
\exp\!\left(
-\frac{d_{\mathrm{AIRM}}\!\big(\widetilde G_{\Phi_i^{(a)},P_K},\widetilde G_{\Phi_j^{(b)},P_K}\big)}{\sqrt{K}}
\right),
\]
where $\Phi_i^{(a)}$ denotes the input-to-layer map for layer $i$ in model $a$, $G_{\Phi_i^{(a)},P_K}$ is the corresponding subspace-restricted expected pullback metric, and $\widetilde G$ denotes the trace-scaled SPD lift used throughout the paper. For comparison, we also compute layer-similarity matrices using linear CKA and RBF CKA on the same input-to-layer maps.

We report the average similarity matrix
\[
\bar S_K
\coloneqq
\frac{1}{|\mathcal P|}\sum_{(a,b)\in\mathcal P} S^{(a,b)}_K,
\]
where $\mathcal P$ is the set of unordered model pairs. The displayed heatmaps in both the main text and appendix are these pair-averaged matrices.

\paragraph{Task families.}
The main-text sanity check uses a \emph{random-basis} family. We sample a single parent orthonormal matrix
\[
P^{\mathrm{rand}}_{K_{\max}}\in\mathbb{R}^{d\times K_{\max}},
\qquad
(P^{\mathrm{rand}}_{K_{\max}})^\top P^{\mathrm{rand}}_{K_{\max}}=I,
\]
and define nested restrictions
\[
P^{\mathrm{rand}}_{K}
=
P^{\mathrm{rand}}_{K_{\max}}[:,1{:}K].
\]
Perturbations are then drawn as
\[
\delta = P^{\mathrm{rand}}_{K} z,
\qquad
z\sim\mathcal N(0,I_K).
\]
Thus the admissible perturbations lie in a shared random $K$-dimensional subspace, while directions inside that subspace are weighted isotropically. 

As a robustness analysis, we also consider a \emph{PCA-basis} family. Let
\[
\mu \coloneqq \E_{x\sim\mathcal D}[x],
\qquad
\Sigma_{\mathcal D}
\coloneqq
\E_{x\sim\mathcal D}\big[(x-\mu)(x-\mu)^\top\big].
\]
Write the eigendecomposition of the stimulus covariance as
\[
\Sigma_{\mathcal D}=U\Lambda U^\top,
\qquad
\Lambda=\mathrm{diag}(\lambda_1,\dots,\lambda_d),
\qquad
\lambda_1\ge \cdots \ge \lambda_d\ge 0,
\]
with columns of $U$ given by principal directions $u_1,\dots,u_d$. For dimension $K$, define
\[
P^{\mathrm{PCA}}_{K}
\coloneqq
[u_1,\dots,u_K]\in\mathbb{R}^{d\times K},
\qquad
(P^{\mathrm{PCA}}_{K})^\top P^{\mathrm{PCA}}_{K}=I_K.
\]
We again take
\[
\delta = P^{\mathrm{PCA}}_{K} z,
\qquad
z\sim\mathcal N(0,I_K).
\]
This defines a data-adapted but within-subspace isotropic family, where the support of the task family is determined by the dominant principal directions of the dataset, but all directions inside that span are weighted equally. Equivalently, this is a PCA-basis second-moment local task family with $C_z=I_K$.

For either choice of $P_K$, the corresponding expected pullback metric is
\[
G_{f,P_K}
\coloneqq
\E_{x\sim\mathcal D}\!\left[(J_f(x)P_K)^\top (J_f(x)P_K)\right]\in\mathbb{R}^{K\times K},
\]
and expected local discriminability over the family is
\[
\E[d'^2] = \Tr(C_z\, G_{f,P_K}).
\]
Under the isotropic choice $C_z=I_K$, $G_{f,P_K}$ remains the minimal dataset-level summary that preserves expected performance over the chosen subspace-restricted family.

\paragraph{Evaluation metrics.}
The main-text and appendix figures use several summary statistics derived from the pairwise similarity matrices.

First, layer-identification accuracy is defined by checking whether the correct corresponding layer is the top match in each row and column:
\[
\mathrm{Acc}_K
=
\frac{1}{2L|\mathcal P|}
\sum_{(a,b)\in\mathcal P}
\left[
\sum_{i=1}^{L}\mathbf 1\!\left\{\arg\max_j S^{(a,b)}_K(i,j)=i\right\}
+
\sum_{j=1}^{L}\mathbf 1\!\left\{\arg\max_i S^{(a,b)}_K(i,j)=j\right\}
\right]\times 100,
\]
where $L=8$ is the number of compared layers.

Second, the \emph{similarity decay by layer distance} averages similarity over entries with a fixed absolute layer offset:
\[
D_K(\Delta)
\coloneqq
\frac{1}{|\{(i,j): |i-j|=\Delta\}|}
\sum_{|i-j|=\Delta} \bar S_K(i,j).
\]
This is the quantity plotted in panel A of Figure~\ref{fig:layer_identification_main}.
\begin{figure*}[t]
    \centering
    \includegraphics[width=\textwidth]{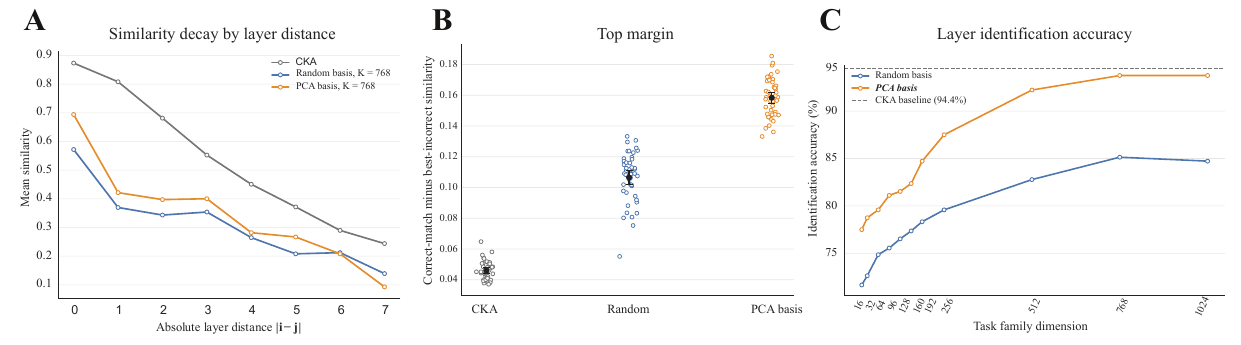}
    \caption{
    \textbf{Layer-matching diagnostics.}
    (A) Mean similarity as a function of absolute layer distance for CKA and S-RAS under random-basis and PCA-basis families at \(K=768\).
    (B) Top-margin statistic, defined as the correct-match similarity minus the strongest incorrect-match similarity.
    (C) Layer-identification accuracy as a function of task-family dimension \(K\), with linear CKA shown as a dashed reference.
    }
    \label{fig:layer_identification_main}
\end{figure*}

\begin{figure}[t]
    \centering
    \includegraphics[width=\linewidth]{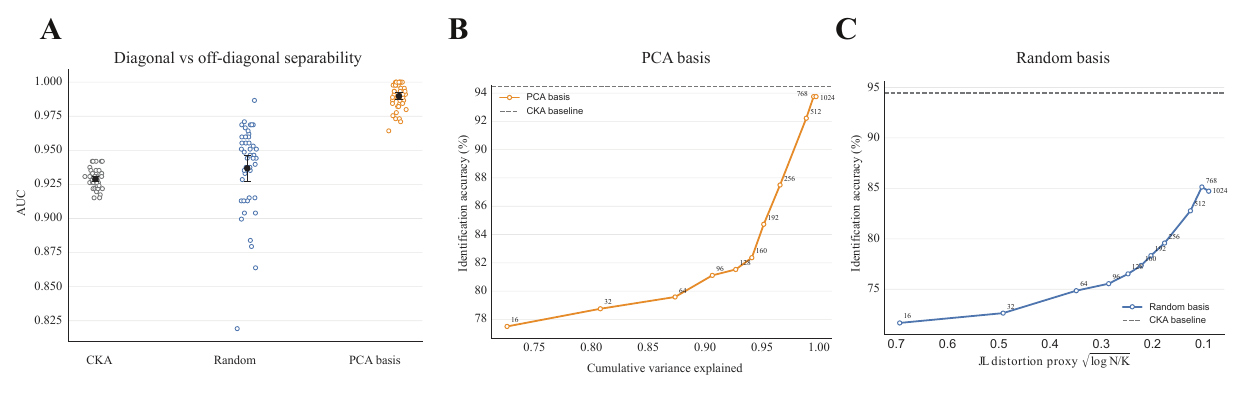}
    \caption{
    \textbf{Task-family dependence of layer-identification performance and off-diagonal separability.}
    \textbf{A}, diagonal-versus-off-diagonal separability, measured by AUC, for linear CKA, S-RAS under the random-basis family, and S-RAS under the PCA-basis family.
    \textbf{B}, layer-identification accuracy for the PCA-basis family as a function of cumulative variance explained by the retained principal directions.
    \textbf{C}, layer-identification accuracy for the random-basis family as a function of the Johnson--Lindenstrauss distortion proxy $\sqrt{\log N / K}$.
    Together, these panels show that the task-family choice changes the geometric axis along which performance is organized. In the PCA-basis family, performance tracks the amount of dataset variance admitted into the family, whereas in the random-basis family, performance improves as the expected projection distortion decreases.
    }
    \label{fig:layer_identification_appendix_family_dependence}
\end{figure}
Third, the \emph{top-1 margin} is defined for each row and column as the similarity of the correct layer match minus the strongest incorrect match, then averaged across rows and columns. This is the quantity plotted in panel B of Figure~\ref{fig:layer_identification_main}.

Fourth, the appendix uses a threshold-free summary of diagonal-versus-off-diagonal separability. For each pairwise similarity matrix, diagonal entries are treated as positives and off-diagonal entries as negatives, and the resulting area under the ROC curve (AUC) is computed. Larger values indicate stronger separation between correct and incorrect matches.

Finally, for the family-dependence plots in Figure~\ref{fig:layer_identification_appendix_family_dependence}, we summarize the horizontal axis differently for the two families. In the PCA-basis family, we report cumulative explained variance of the retained principal directions. In the random-basis family, we report the Johnson--Lindenstrauss distortion proxy
\[
\epsilon_{\mathrm{JL}}(K)\propto \sqrt{\frac{\log N}{K}},
\]
with $N$ the number of sampled images, as a simple measure of how projection distortion should decrease as $K$ grows.

\begin{figure}[t]
    \centering
    \includegraphics[width=0.98\linewidth]{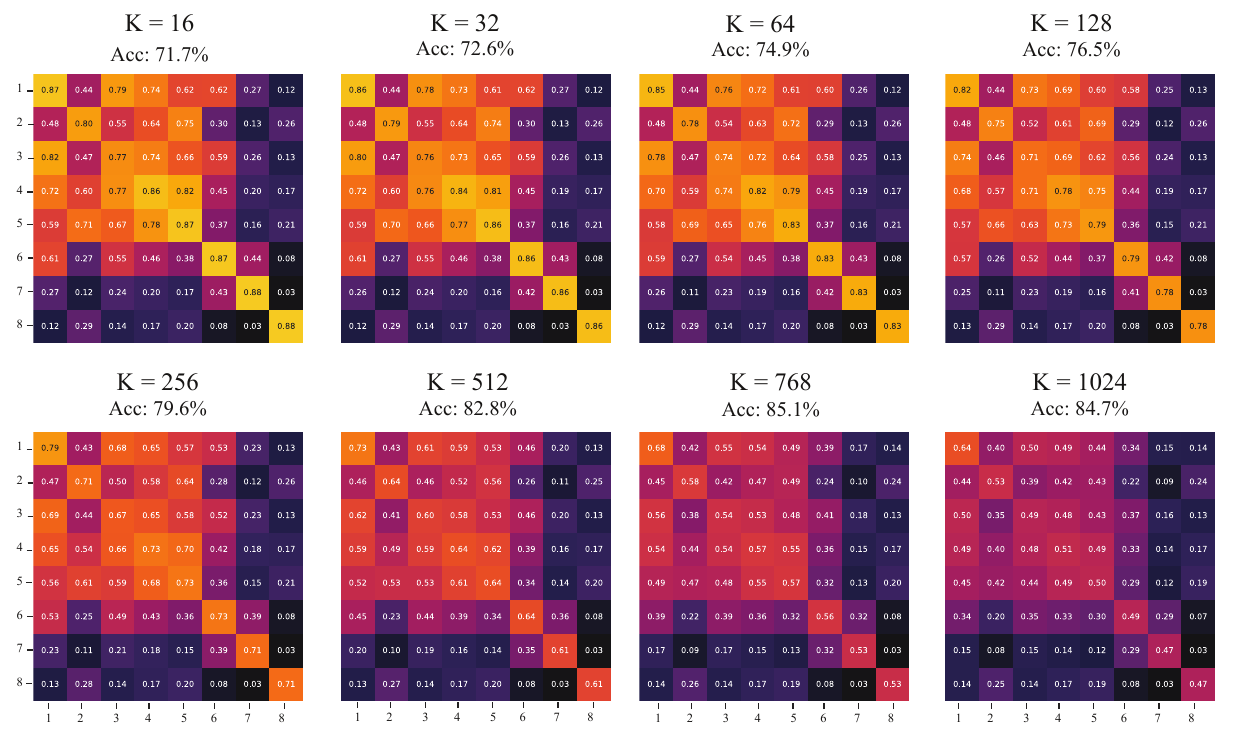}
    \vspace{-0.7em}
    \caption{
    \textbf{Random-basis task-family sweep for S-RAS layer identification.}
    Each panel shows the pair-averaged S-RAS similarity matrix for a random-basis subspace-restricted local perturbation task family of dimension $K$, together with the corresponding layer-identification accuracy.
    As the task-family dimension increases from $K=16$ to $K=1024$, accuracy rises from $71.7\%$ to $84.7\%$, with the highest value reached at $K=768$ and similar performance at $K=1024$.
    Across the sweep, the similarity structure becomes increasingly concentrated near the diagonal while distant off-diagonal entries are more strongly suppressed.
    These panels should be read not as increasingly accurate random \emph{sketches} of a fixed object, but as nested restrictions of the random-basis local perturbation task family used to probe dataset-averaged sensitivity geometry.
    }
    \label{fig:heatmaps_random}
    \vspace{-0.9em}
\end{figure}

\begin{figure}[t]
    \centering
    \includegraphics[width=0.98\linewidth]{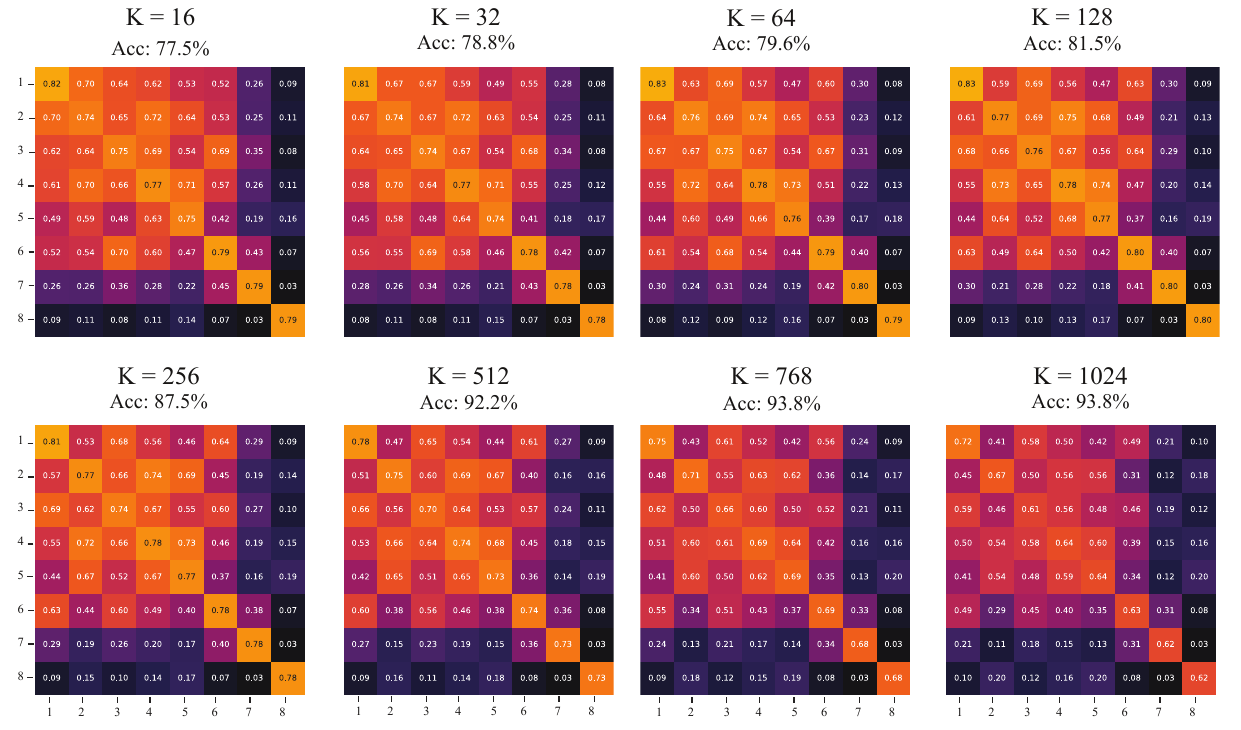}
    \vspace{-0.7em}
    \caption{
    \textbf{PCA-basis task-family sweep for S-RAS layer identification.}
    Each panel shows the pair-averaged S-RAS similarity matrix for a PCA-basis subspace-restricted local perturbation task family of dimension $K$, together with the corresponding layer-identification accuracy.
    Here the admissible perturbations are restricted to the top-$K$ principal directions of the dataset, but weighted isotropically within that span.
    As $K$ increases from $16$ to $1024$, identification accuracy rises from $77.5\%$ to $93.8\%$, and the similarity structure becomes more tightly concentrated around the diagonal than in the random-basis family.
    This indicates that when the local task family is aligned with dominant directions of dataset variation, corresponding layers are recovered even more reliably, while distant incorrect matches remain strongly suppressed.
    }
    \label{fig:heatmaps_pca}
    \vspace{-0.9em}
\end{figure}

\paragraph{Compute and memory profile.}
Table~\ref{tab:layer_identification} uses \(10\) Tiny10 models, \(8\) layers, \(45\) unordered model pairs, and the same fixed \(n=1024\) CIFAR-10 images for reported methods. The trace-scaled SPD lift uses \(\epsreg=10^{-4}\). Dataset-level \sras\ stores one \(K\times K\) expected operator per model, layer, and family, so the dominant cache for one \(K=768\) operator is \(768^2\) float32 entries. By contrast, pointwise AIRM and the MSA-style control store imagewise projected metrics of size \(n\times K\times K\), making their memory scale linearly with the number of images in addition to quadratically in \(K\). This is why the pointwise local-geometry controls are reported up to \(K=256\), whereas dataset-level \sras\ is reported at \(K=768\). Activation baselines require only forward activations and sample-space similarity computations, and are therefore not compute-matched to the JVP-based local-geometry methods. The comparison in Table~\ref{tab:layer_identification} is therefore sample-matched and checkpoint-matched, but not compute-matched.

\begin{table}[t]
\centering
\scriptsize
\setlength{\tabcolsep}{3pt}
\renewcommand{\arraystretch}{1.08}
\caption{\textbf{Layer-matching compute and memory scaling.} All methods in Table~\ref{tab:layer_identification} use the same \(n=1024\) images and \(45\) unordered model pairs. Memory entries describe the dominant cached array type, excluding model parameters and framework workspace.}
\label{tab:layer_matching_compute}
\begin{tabular}{@{}p{0.20\linewidth}p{0.31\linewidth}p{0.39\linewidth}@{}}
\toprule
Method & Dominant cached object & Scaling and practical consequence \\
\midrule
\sras\ full / shape
& Dataset-level \(G\in\mathbb R^{K\times K}\) per model, layer, and family.
& Memory scales as \(O(K^2)\) per summary after streaming over images. This permits the \(K=768\) high-\(K\) reference. Shape-only \sras\ reuses the same summaries after trace normalization. \\
\midrule
pw-AIRM
& Imagewise projected metrics \(G_x\in\mathbb R^{n\times K\times K}\).
& Memory scales as \(O(nK^2)\), and pairwise comparisons average \(n\) local AIRM distances. This motivates the \(K=256\) cap for pointwise controls. \\
\midrule
MSA-style spectral ratio
& Same imagewise projected metrics as pw-AIRM.
& Uses the same \(O(nK^2)\) cache, then averages a local spectral-ratio distance over images. This is a down-projected control, not a full ambient-dimensional MSA benchmark. \\
\midrule
CKA / Procrustes / CCA
& Forward activations, Gram matrices, or sample-space SVD factors.
& No JVPs or imagewise local metric cache are required. These baselines are sample-matched but not compute-matched to local-geometry methods. \\
\bottomrule
\end{tabular}
\end{table}

\paragraph{Results.}

Figure~\ref{fig:layer_identification_main} provides the diagnostic summary for the layer-matching experiment. Panel A shows that S-RAS similarities decay more sharply with absolute layer distance than CKA, indicating stronger suppression of distant off-diagonal matches. Panel B shows the corresponding top-margin statistic: the correct match is more separated from the strongest incorrect match under the projected sensitivity families, especially for the PCA-basis family. Panel C reports the \(K\)-sweep, showing that layer-identification accuracy improves with task-family dimension and that the PCA-basis family approaches the CKA baseline at high \(K\).

The full random-basis and PCA-basis heatmap sweeps are shown in Figures~\ref{fig:heatmaps_random} and~\ref{fig:heatmaps_pca}. These supplementary results show that the correspondence signal strengthens with \(K\), but not perfectly monotonically, reflecting both increased retained dimension and the particular directions included in each perturbation family. The family-dependence analyses in Figure~\ref{fig:layer_identification_appendix_family_dependence} further show that random and PCA-basis families organize the comparison differently: random-family performance tracks reduced projection distortion, whereas PCA-basis performance tracks the amount of dataset variance admitted into the family. Appendix~\ref{app:sr_diagnosis} gives diagnostic context for the down-projected spectral-ratio baseline, including its invariance to conformal gain changes and its dependence only on the extreme generalized eigenvalues.

\subsection{Held-out class-conditional diagnostic probes}
\label{app:diagnostic_probes}

This appendix gives the exact construction used in Section~\ref{sec:diagnostic_probes}. The goal is to test whether the class-conditional projected expected pullback metric can be externalized into a small finite probe set that generalizes to held-out models and held-out images.
\paragraph{Class-conditional summaries.}
For each image \(x\), let \(\psi_x(u)\), \(u\in\mathbb R^q\), denote the smooth-deformation parameterization and define
\[
h_{m,x}(u)=f_m(\psi_x(u)).
\]
For each model \(m\), family dimension \(K\), and class \(y\), we estimate
\[
G^{(y)}_{m,P_K}
=
\E_{x\sim \mathcal D_y}\!\left[
(J_{h_{m,x}}(0)P_K)^\top
(J_{h_{m,x}}(0)P_K)
\right]
\in\mathbb S_+^K .
\]

Here \(q\) is the dimension of the learned deformation-coordinate system, and \(P_K\in\mathbb R^{q\times K}\) is the \(K\)-dimensional task-family subspace inside that system. It is taken as a nested restriction of a single maximal basis so that comparisons across \(K\) remain within the same ambient learned family.

The perturbation family is built from a broad boundary-safe smooth deformation bank. Concretely, we begin with a \(6\times 6\) bank of scalar sine modes on the image lattice and form separate horizontal and vertical displacement fields, yielding \(72\) base flows in total. For each image, base tangents are estimated by centered finite differences with coefficient \(0.5\) pixels RMS. We then learn a nested family basis from the empirical covariance of these tangents on a separate family-learning split of \(500\) training images. All reported \(K\in\{8,16,32,64\}\) families are nested restrictions of this single learned \(K_{\max}=64\) family, so changing \(K\) changes family dimension without changing the ambient learned basis.

\paragraph{Discovery/test split.}
Image pools are separated by role. The deformation family basis is learned from \(500\) CIFAR-10 training images. Class-conditional sensitivity contrasts are estimated on a disjoint \(2000\)-image CIFAR-10 training split. Probe evaluation uses only CIFAR-10 test images: from a deterministic candidate pool of \(256\) test images, we retain the first \(128\) that are clean-correct for at least \(12\) models in total and at least \(4\) models from each architecture group across the full bank.

We use \(10\) balanced model splits. In each split, the two architecture groups are partitioned into \(5\) discovery and \(5\) held-out models per group. Probe directions are derived using only the discovery models and the training-image contrast summaries. Evaluation is then performed only on held-out models and the retained held-out test images. During held-out evaluation, an image contributes only if at least \(2\) held-out models from each group remain valid on that image. Thus the finite probes are derived from training images and discovery models, whereas the reported separations are evaluated on held-out test images and held-out models.

\paragraph{Probe construction.}
For a class \(y\), let
\[
G^{(y)}_{A,P_K}
\qquad\text{and}\qquad
G^{(y)}_{B,P_K}
\]
denote the discovery-set class-conditional averages for the two model groups. We define the contrast operator
\[
\Delta G_y
\coloneqq
G^{(y)}_{A,P_K}-G^{(y)}_{B,P_K}.
\]
Let
\[
\Delta G_y = U\Lambda U^\top
\]
be its eigendecomposition. The top positive eigenvectors define probes favoring group \(A\), while the most negative eigenvectors define probes favoring group \(B\). We use \(P\) probes per side and instantiate each probe direction as a finite perturbation in the learned family.

\paragraph{Probe score.}
Each selected eigen-direction is evaluated with both finite warp signs to reduce dependence on odd finite-amplitude effects. Let \(b\in\{+,-\}\) index the two contrast sides, corresponding to the positive and negative eigendirections of \(\Delta G_y\), let \(p\in\{1,\ldots,P\}\) index probes within a side, let \(\sigma\in\{-1,+1\}\) denote the finite warp sign, and let \(a\in\mathcal A=\{0.5,1.0\}\) denote RMS-flow amplitude. For model \(m\), image \(x\), and finite perturbation \(\delta_{b,p,\sigma,a}\), define
\[
\Delta M_m(x;\delta)
=
M_m(x)-M_m(x+\delta),
\]
where \(M_m\) is the true-class logit margin. We first symmetrize within each contrast side,
\[
R_m^{b}(x)
=
\frac{1}{P|\mathcal A|\,2}
\sum_{p=1}^{P}
\sum_{a\in\mathcal A}
\sum_{\sigma\in\{-1,+1\}}
\Delta M_m(x;\delta_{b,p,\sigma,a}),
\]
and then define the held-out probe score as
\[
s_m(x)=R_m^{+}(x)-R_m^{-}(x).
\]
Thus the finite perturbation response is averaged over warp signs and amplitudes within each side, while the final score is the contrast between positive- and negative-eigenvector probe sides.

\paragraph{Aggregation and uncertainty.}
For each split, class, probe type, family dimension, amplitude, and probe count, we average \(s_m(x)\) over valid held-out images and held-out models to obtain a split-class image-separation row. For formal statistical testing, we aggregate these rows over classes within each held-out model split and perform paired one-sided Wilcoxon tests across the \(10\) split-level paired observations. Split-class scatter plots are retained only as diagnostic visualizations of heterogeneity across classes. Unless otherwise stated, appendix summaries report means over the saved split-class rows; the formal \(p\)-values use the split-level aggregation.

\paragraph{Control probe sets.}
We compare the contrast-derived probes to three controls.
\begin{enumerate}
    \item \textbf{Random-contrast probes:} random directions sampled in the same family and ranked by the same quadratic contrast criterion.
    \item \textbf{Pooled-sensitivity probes:} directions drawn from high-sensitivity directions under \(G^{(y)}_{A,P_K}+G^{(y)}_{B,P_K}\) and then ranked by the same contrast score within that restricted candidate set.
    \item \textbf{Label-permutation null:} probes derived after randomly permuting discovery-set group labels, rebuilding the contrast from those permuted labels, and deriving probes from that null contrast.
\end{enumerate}

\begin{figure*}[t]
    \centering
    \includegraphics[width=\textwidth]{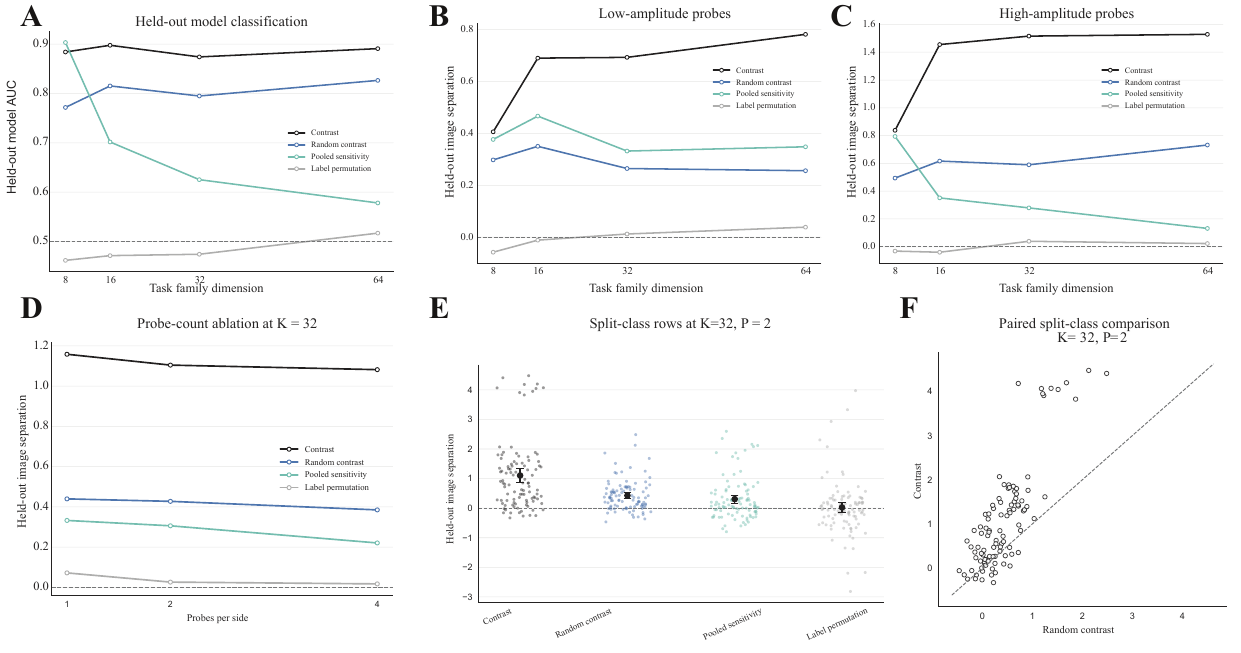}
    \caption{
    \textbf{Additional ablations for held-out class-conditional diagnostic probes.}
    \textbf{A}, held-out model classification AUC as a function of task-family dimension \(K\).
    \textbf{B}, low-amplitude version of the held-out image-separation analysis.
    \textbf{C}, high-amplitude version of the held-out image-separation analysis.
    \textbf{D}, probe-count ablation at the main setting (\(K=32\)).
    \textbf{E}, distribution of split-class rows at the main setting.
\textbf{F}, split-class diagnostic visualization against the random-contrast baseline.
These panels show that the held-out image-separation effect is robust across perturbation amplitude and probe count. Split-class rows are shown as a diagnostic visualization; formal statistical tests in the main result aggregate first to the held-out model-split level.
    }
    \label{fig:diagnostic_probes_appendix}
\end{figure*}

\paragraph{Main result.}
At the main setting (\(K=32\), two probes per side), contrast-derived probes achieve mean held-out image separation \(1.10\), compared with \(0.43\) for random-contrast probes, \(0.31\) for pooled-sensitivity probes, and \(0.03\) for the label-permutation null. Using the split-level test defined above, contrast-derived probes outperform each control, with one-sided paired Wilcoxon \(p=9.77\times10^{-4}\) for contrast versus random contrast, contrast versus pooled sensitivity, and contrast versus label permutation.

\paragraph{Held-out model classification.}
As a secondary diagnostic, we also ask whether the average probe score can classify held-out models by group. At the same setting, mean held-out model AUC is \(0.874\) for contrast-derived probes, compared with \(0.795\) for random-contrast probes, \(0.625\) for pooled-sensitivity probes, and \(0.474\) for the label-permutation null. Because this comparison is between ResNet18 and small ViT, we treat held-out model classification as supporting evidence rather than as the primary metric.

\paragraph{Ablations.}
We report results for nested family dimensions \(K\in\{8,16,32,64\}\), for probe counts \(P\in\{1,2,4\}\) per side, and for low- and high-amplitude perturbations separately. In the saved runs, the main held-out image-separation effect is already strong by \(K=16\), remains strong at \(K=32\), and changes only modestly at \(K=64\). At \(K=32\), mean held-out image separation for contrast-derived probes is \(1.16\), \(1.10\), and \(1.08\) for \(1\), \(2\), and \(4\) probes per side, respectively, showing that the main effect is also stable across probe-count choices. These auxiliary analyses support the interpretation that the main result is not tied to one fragile choice of family dimension, amplitude, or probe budget.

\subsection{Robust training as a controlled dissociation}
\label{app:robust_dissociation}

This appendix reuses the same learned-family, finite-probe, and held-out evaluation framework as Appendix~\ref{app:diagnostic_probes}, but now in a fixed-architecture bank of CIFAR-10 ResNet-18 models trained with four loss functions: standard empirical-risk minimization, PGD adversarial training, TRADES, and MART. The purpose of this experiment is to remove architecture as a confound and ask whether the learned probe battery can reveal differences in local evidence use that are attributable to training objective alone.

For each regime pair, we generate \(10\) matched discovery/test splits. Because each robust bank contains \(5\) models, each split uses \(3\) discovery and \(2\) held-out models per group. Unless otherwise noted, the main analysis uses the pre-logit representation, \(K\in\{8,16,32,64\}\), and two probes per side. Benchmark images are selected from a candidate pool of \(256\) test images, retaining the first \(128\) that are clean-correct for at least \(12\) models in total. For each regime pair and family size \(K\), we form a discovery-set contrast between the corresponding regime-level sensitivity summaries, derive contrast-favoring and contrast-opposing probe directions from that operator, and evaluate them on held-out models and held-out images. During held-out evaluation, an image contributes whenever at least one held-out model from each group is valid on that image.

\begin{figure*}[t]
    \centering
    \includegraphics[width=\textwidth]{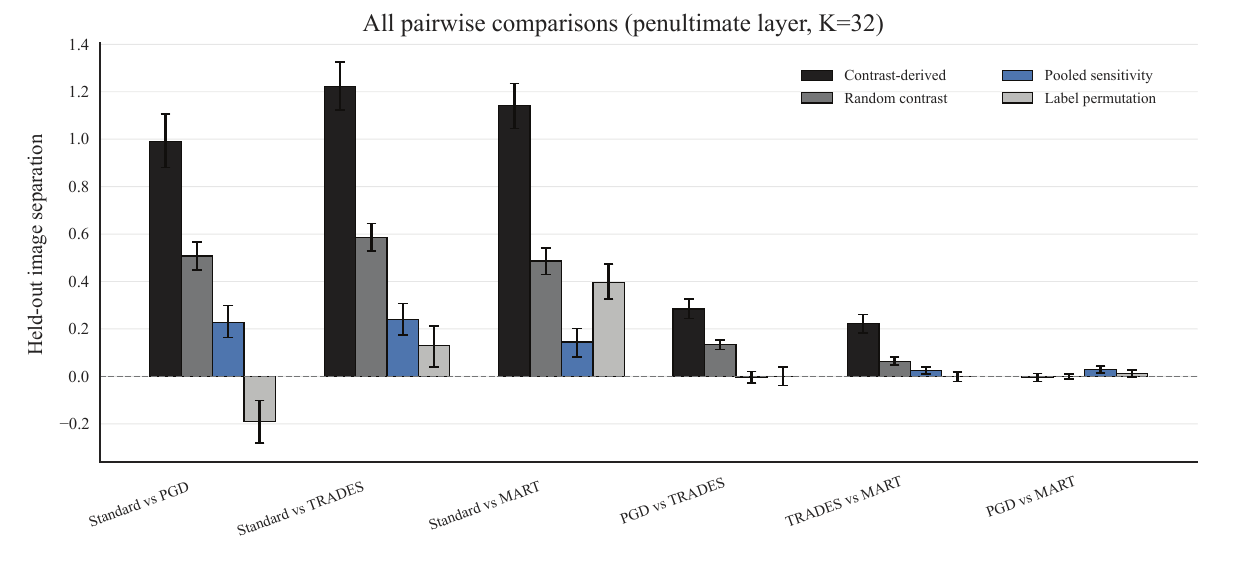}
    \caption{
    \textbf{All pairwise regime comparisons at the main setting.}
    Held-out image separation at \(K=32\) for all six pairwise comparisons among standard, PGD, TRADES, and MART-trained ResNet-18 models, comparing contrast-derived probes to the same three controls used in the main text.
    The three standard-versus-robust comparisons are strongest, TRADES versus MART is smaller but positive, and PGD versus MART shows little or no separation under the present family.
    }
    \label{fig:adv_std_all_appendix}
\end{figure*}

Figure~\ref{fig:adv_std_all_appendix} reports all six pairwise comparisons at \(K=32\). The three standard-versus-robust comparisons are strongest, with mean held-out image separation \(0.99\) (standard vs PGD), \(1.22\) (standard vs TRADES), and \(1.14\) (standard vs MART). Within the robust models, PGD versus TRADES remains clearly positive (\(0.28\)), TRADES versus MART is smaller but still reliable (\(0.22\)), and PGD versus MART is near zero (\(-0.01\)). At \(K=32\), contrast-derived probes outperform all three controls in the TRADES-versus-MART comparison (all \(p<10^{-3}\)), whereas no such advantage is present for PGD versus MART. 

\begin{figure}[t]
    \centering
    \includegraphics[width=\linewidth]{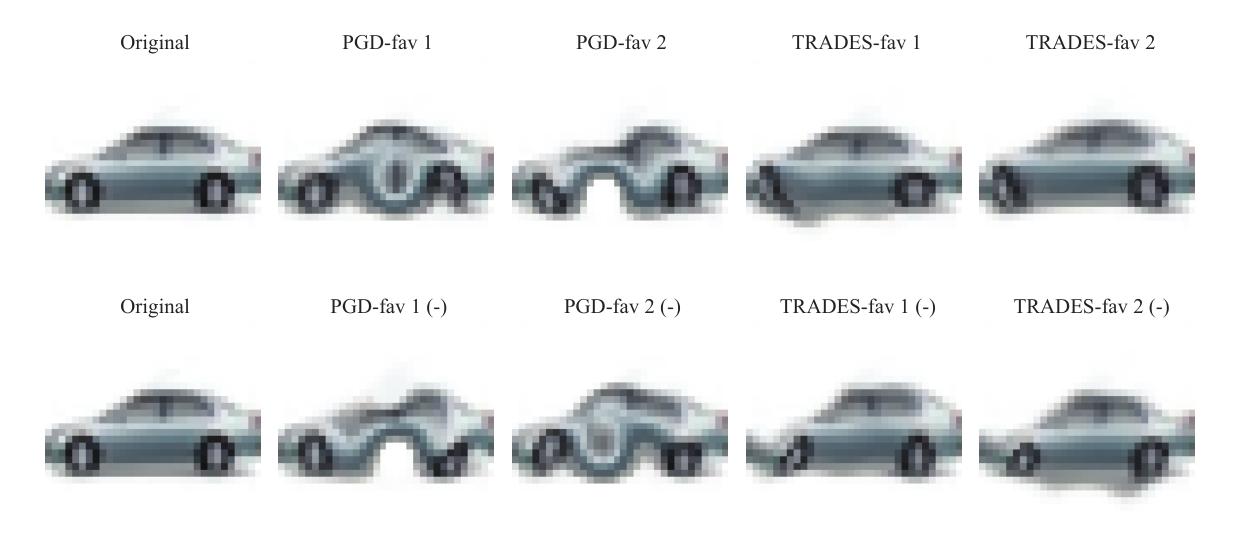}
    \caption{
    \textbf{Within-robust qualitative example.}
    Example probes for PGD versus TRADES at the main setting. Even though the quantitative effect is smaller than in the standard-versus-robust comparisons, the two probe sets still differ in visible character.
    }
    \label{fig:adv_std_qual_appendix}
\end{figure}

\begin{figure}[t]
    \centering
    \includegraphics[width=\linewidth]{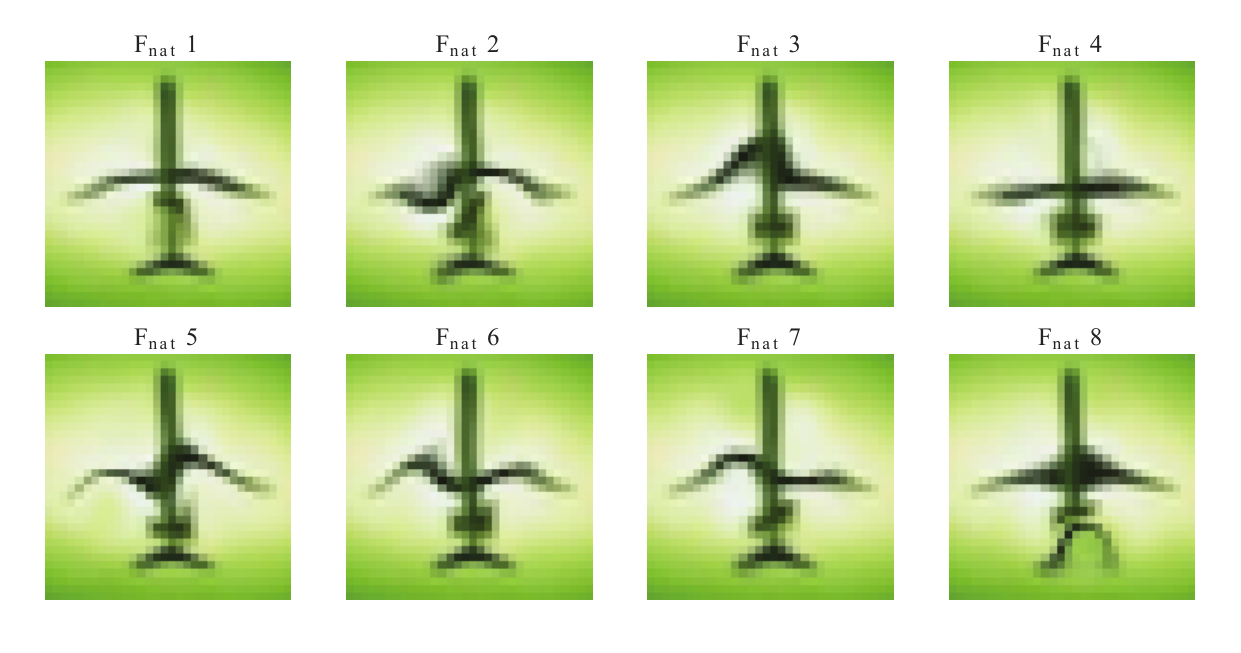}
    \caption{
    \textbf{Examples from the learned family basis.}
    Representative family directions used to instantiate finite probes. These basis elements are smooth local deformations rather than noisy pixel patterns, which helps interpret the learned probe sets as structured perturbations within a coherent family.
    }
    \label{fig:adv_std_basis_appendix}
\end{figure}

\paragraph{Learned family basis.}
In the robust-training experiment, the perturbation family is represented by a shared learned basis
\[
F_{\mathrm{nat}}=\{F_{\mathrm{nat},1},\dots,F_{\mathrm{nat},K_{\max}}\},
\]
where each \(F_{\mathrm{nat},j}\) is a learned smooth deformation field in pixel space. These basis elements are obtained by first constructing the \(72\)-flow base bank, estimating image-conditioned tangents in that bank, computing the tangent covariance on the family-learning split, and taking the top eigenvectors of that covariance as a learned maximal family basis. For family dimension \(K\), perturbations are then restricted to linear combinations of the first \(K\) learned basis elements. Thus \(F_{\mathrm{nat}}\) denotes the ambient learned family basis itself, not a set of contrast-derived probes. Figure~\ref{fig:adv_std_basis_appendix} shows representative basis elements themselves, whereas Figure~\ref{fig:adv_std_qual_appendix} shows selected probe directions obtained from between-group contrasts within that family.

Figures~\ref{fig:adv_std_qual_appendix} and~\ref{fig:adv_std_basis_appendix} provide qualitative support for this interpretation. The within-robust PGD-versus-TRADES example shows that the learned probes remain visibly different even when the quantitative effect size is smaller, while the family-basis examples show that the ambient perturbation family itself is smooth and structured.

\begin{figure*}[t]
    \centering
    \includegraphics[width=\textwidth]{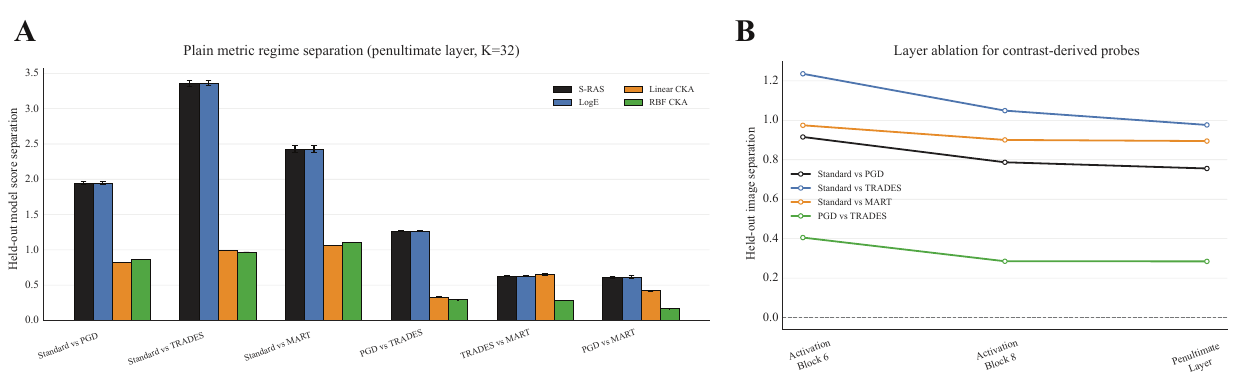}
    \caption{
    \textbf{Supporting analyses: direct model-space separation and layer ablation.}
    (A), direct model-space score separation at the pre-logit layer for \sras, Log-Euclidean distance, and activation-based baselines.
    (B), layer ablation for contrast-derived probes across block 6, block 8, and pre-logits.
    Direct model-space separation is already strong for several metrics, whereas the held-out probe effect is not confined to the final layer and is often already strong in intermediate blocks.
    }
    \label{fig:adv_std_plain_layer_appendix}
\end{figure*}

\paragraph{Direct model-space separation.}
The direct model-space analysis in Figure~\ref{fig:adv_std_plain_layer_appendix} uses the same discovery/test splits as the probe analysis, but does not instantiate finite image perturbations. For a regime pair \(A,B\), split \(r\), held-out model \(m\), and distance matrix \(D\) computed on a fixed layer and family dimension, define
\[
d_A(m)=\frac{1}{|\mathcal D_A^{(r)}|}
\sum_{j\in\mathcal D_A^{(r)}}D(m,j),
\qquad
d_B(m)=\frac{1}{|\mathcal D_B^{(r)}|}
\sum_{j\in\mathcal D_B^{(r)}}D(m,j),
\]
where \(\mathcal D_A^{(r)}\) and \(\mathcal D_B^{(r)}\) are the discovery models from the two regimes. The model-space score is
\[
u_D(m)=d_B(m)-d_A(m),
\]
so larger values indicate that \(m\) is closer to the discovery models from regime \(A\). The plotted model-space score separation is
\[
\operatorname{Sep}_D
=
\frac{1}{|\mathcal H_A^{(r)}|}
\sum_{m\in\mathcal H_A^{(r)}}u_D(m)
-
\frac{1}{|\mathcal H_B^{(r)}|}
\sum_{m\in\mathcal H_B^{(r)}}u_D(m),
\]
averaged over the \(10\) held-out splits, where \(\mathcal H_A^{(r)}\) and \(\mathcal H_B^{(r)}\) are the held-out models. We also compute the corresponding held-out model AUC from the same scores. Distances \(D\) are computed using \sras, Log-Euclidean distance, linear CKA distance, or RBF CKA distance on the same saved representations.

Figure~\ref{fig:adv_std_plain_layer_appendix} should be read as supporting context rather than the primary probe claim. Direct model-space regime separation is already strong for several metrics, which is why we focus the main result on finite perturbations and held-out image transfer. The layer ablation shows that the standard-versus-robust and PGD-versus-TRADES effects are not confined to the final layer, whereas PGD versus MART remains weak across layers. In addition to the main regime-pair comparisons, the saved analyses include all six pairwise regime contrasts at \(K=32\), \(K\)-sweeps for the probe battery, layer ablations across block 6, block 8, and pre-logits, and direct model-space separation baselines using \sras, Log-Euclidean distance, and activation-based comparisons.

\subsection{Biological static-gratings implementation}
\label{app:biology_details}

This appendix describes how the biological analysis approximates the theory in Section~\ref{sec:theory}. The theory assumes a smooth parameterized stimulus family \(x=\psi(s)\) and a representation map \(h_f(s)=f(\psi(s))\). In the Allen static-gratings analysis, the role of the representation map is played by the experiment-level population response map
\[
s\mapsto \mu_e(s),
\qquad
s=(\theta,\rho,\phi).
\]

\paragraph{Recordings and trial responses.}
For each experiment \(e\), we use the static-gratings stimulus table and retain the condition coordinates orientation, spatial frequency, and phase. The Allen static-gratings protocol contains \(6\) orientations, \(5\) spatial frequencies, and \(4\) phases, yielding \(120\) stimulus conditions \cite{AllenStaticGratings}. For each trial we compute response-minus-baseline activity,
\[
\Delta r_{t}=r^{\mathrm{response}}_{t}-r^{\mathrm{baseline}}_{t},
\]
using the precomputed trial response and baseline summaries. The condition mean
\[
\mu_e(s)=\E[\Delta r_t\mid s_t=s]
\]
is estimated by averaging trials assigned to the same grating condition.

\paragraph{Matched-count populations.}
Because Fisher scale depends strongly on population size, each benchmark uses matched-count cell subsampling. For the area benchmark we use \(n_{\mathrm{match}}=60\) cells per experiment; for the VISp-depth benchmark we use \(n_{\mathrm{match}}=45\). For each experiment, we draw \(100\) matched cell subsamples without replacement. For each subsample, condition means and pooled covariances are recomputed from the selected cells, and the reported experiment-level operator is the average over the \(100\) matched-subsample operators. Matched split-half reliability is computed by splitting trials within each condition, recomputing the full Fisher operator on the two halves, comparing the two half-operators with S-RAS, and averaging over \(2\) split-half repetitions per matched subsample. Full-population reliability diagnostics use \(8\) split-half repetitions.
\paragraph{Noise covariance.}
Let \(\Delta r_t\in\mathbb R^{n_{\mathrm{match}}}\) denote the response-minus-baseline vector for trial \(t\), and let
\[
\epsilon_t=\Delta r_t-\mu_e(s_t)
\]
be the residual after subtracting the condition mean. We concatenate residuals across static-grating conditions and estimate \(\widehat\Sigma_e\) with Ledoit--Wolf shrinkage covariance, using the centered residuals. The shrinkage coefficient is estimated separately for each experiment or matched subsample. Before inversion, we use the same SPD numerical constant as in the S-RAS lift, \(\varepsilon_{\mathrm{SPD}}=10^{-6}\): the covariance is replaced by \(\widehat\Sigma_e+\varepsilon_{\mathrm{SPD}}I\), symmetrized, and eigenvalues are floored at \(\varepsilon_{\mathrm{SPD}}\) before inversion. The noise-aware Fisher operator uses this regularized inverse covariance, whereas the naive baseline replaces the inverse covariance by the identity.

\paragraph{Finite-difference Jacobian.}
We estimate the Jacobian
\[
D\mu_e(s)=
\begin{bmatrix}
\partial_\theta\mu_e(s) &
\partial_\rho\mu_e(s) &
\partial_\phi\mu_e(s)
\end{bmatrix}
\]
by finite differences over the static-gratings grid. Thus this biological instantiation has coordinate dimension \(q=3\), response dimension \(n_{\mathrm{match}}\), \(D\mu_e(s)\in\mathbb R^{n_{\mathrm{match}}\times 3}\), and \(\widehat\Sigma_e\in\mathbb R^{n_{\mathrm{match}}\times n_{\mathrm{match}}}\). Orientation and phase are treated as circular coordinates, while spatial frequency is treated on the log scale
\[
\rho=\log_2(\text{spatial frequency}).
\]
The implementation uses orientation in radians, \(\rho\) in octaves, and the raw Allen phase coordinate for \(\phi\). No additional z-scoring or unit standardization is applied before computing finite differences. Specifically, orientation and phase use centered circular differences, while \(\rho\) uses centered differences across neighboring log-spatial-frequency values; spatial-frequency endpoints without both neighbors are omitted from the average.

\paragraph{Experiment-level operators.}
At each valid grid point \(s\), we form the local noise-aware metric
\[
I_e(s)=D\mu_e(s)^\top \widehat\Sigma_e^{-1}D\mu_e(s)
\]
and the unwhitened local metric
\[
M_e(s)=D\mu_e(s)^\top D\mu_e(s).
\]
The experiment-level summaries are
\[
F_e=\E_s[I_e(s)],
\qquad
G_e=\E_s[M_e(s)].
\]
These are the biological analogues of the projected expected Fisher operator and projected expected pullback metric in the main theory.
\paragraph{Shape-only normalization.}
For shape-only comparisons, we use trace normalization:
\[
\widehat F=\frac{F}{\Tr(F)},
\qquad
\widehat G=\frac{G}{\Tr(G)}.
\]
This differs from determinant or geometric-mean normalization. One-dimensional shape-only families are not reported, because every nonzero \(1\times1\) operator becomes identical after trace normalization.

\paragraph{Family-restricted comparisons.}
For each coordinate family \(Q\subseteq\{\theta,\rho,\phi\}\), let \(P_Q\) be the coordinate-selection matrix. We compute
\[
F_{e,Q}=P_Q^\top F_eP_Q,
\qquad
G_{e,Q}=P_Q^\top G_eP_Q.
\]

The families are
\[
\theta,\quad
\rho,\quad
\phi,\quad
\theta+\rho,\quad
\theta+\phi,\quad
\rho+\phi,\quad
\theta+\rho+\phi.
\]

\paragraph{Similarity and retrieval.}
All Fisher and naive operators are compared using the same regularized S-RAS score as in the model experiments. Each query experiment ranks donor-distinct candidate experiments by similarity. Top-1 accuracy records whether the nearest candidate has the same area or VISp-depth label. Diagonal AUC treats same-label pairs as positives and different-label pairs as negatives and evaluates threshold-free label separation. All method summaries are computed from the same donor-distinct query-record table, so Fisher, shape-only Fisher, naive geometry, CKA, RSA, decoder-profile, and mapping baselines are evaluated on identical query/candidate records.

\paragraph{Activation, decoder-profile, and mapping baselines.}
Let \(X_e\in\mathbb R^{120\times n_e}\) denote the condition-mean response matrix for experiment \(e\), with rows corresponding to the \(6\times5\times4\) static-grating conditions and columns to matched cells. The activation baseline is linear CKA,
\[
\operatorname{CKA}(X_e,X_{e'})
=
\frac{\|X_e^{c\top}X_{e'}^c\|_F^2}
{\|X_e^{c\top}X_e^c\|_F\,\|X_{e'}^{c\top}X_{e'}^c\|_F},
\]
where \(X^c\) denotes column-centered responses. The RSA baseline computes Euclidean representational dissimilarity matrices over the 120 conditions and reports the Spearman correlation between the upper-triangular entries of the two dissimilarity matrices.

The decoder-profile baseline summarizes each experiment by how well its condition-mean responses linearly decode the three stimulus coordinates. We flatten the static-gratings grid to \(120\) condition examples and use cross-validated multinomial logistic regression with standardized features to predict orientation, log-spatial-frequency index, and phase index separately. This gives a three-dimensional profile
\[
d_e=
\bigl(
\operatorname{Acc}_e(\theta),
\operatorname{Acc}_e(\rho),
\operatorname{Acc}_e(\phi)
\bigr).
\]
Decoder-profile similarity between experiments is the Pearson correlation of the finite entries of \(d_e\) and \(d_{e'}\). This is an auxiliary summary of which stimulus variables are linearly accessible from condition means; it is not a local-geometry or noise-aware Fisher summary.

The inter-experiment mapping baselines ask how well responses in one experiment predict responses in another across matched static-grating conditions. For each donor-distinct pair \((e,e')\), we use the same \(120\) condition rows, randomly split conditions into train/test halves for \(10\) repetitions, fit a map from \(X_e\) to \(X_{e'}\) on the training half, and score the mean test-set correlation across target cells. We compute this in both directions and average the two directional scores. The ridge mapping baseline uses dual ridge regression with \(\alpha\in\{0.1,1,10,100\}\) selected by generalized cross-validation within each split. The PLS mapping baseline uses PLS regression with
\[
n_{\mathrm{comp}}=\min(10,n_e,n_{e'},n_{\mathrm{train}}-1).
\]
These mapping baselines test whether the Fisher summaries add information beyond direct linear inter-experiment predictability.

\paragraph{Auxiliary biological diagnostics.}
Table~\ref{tab:biology_diag_auc} reports diagonal-AUC diagnostics for the same donor-distinct query records as the main top-1 table. Table~\ref{tab:biology_incremental} reports an auxiliary area-cohort pair-classification analysis testing whether Fisher summaries add information beyond activation, decoder-profile, and mapping features.

For the pair-classification analysis, each donor-distinct experiment pair is represented by a feature vector containing the pairwise scores from the specified feature set. The binary label is whether the pair shares the same area label in the area cohort. We train a logistic-regression classifier with standardized features and evaluate cross-validated AUC using group splits over unordered experiment-pair groups, so that the same experiment pair does not contribute to both training and test folds. The baseline feature set contains activation CKA, RSA, decoder-profile correlation, ridge mapping score, and PLS mapping score. The ``baseline + Fisher'' rows add one additional Fisher or naive-geometry similarity score to this baseline. These diagnostics are supportive rather than the primary biological benchmark; the main text reports donor-distinct top-1 retrieval because it gives the most direct identity-matching interpretation.

\begin{table}[t]
\centering
\caption{\textbf{Biological diagonal-AUC diagnostics.} Diagonal AUC is a threshold-free same-label versus different-label diagnostic computed from the same donor-distinct query records as the top-1 retrieval table. Top-1 accuracy is the main table metric; diagonal AUC is included to show softer label separation.}
\label{tab:biology_diag_auc}
\begin{tabular}{lrrrr}
\toprule
Cohort & Full Fisher & CKA & Shape-only Fisher & Naive full \\
\midrule
Area & 0.556 & 0.567 & 0.604 & 0.518 \\
VISp depth & 0.549 & 0.547 & 0.540 & 0.513 \\
\bottomrule
\end{tabular}
\end{table}

\begin{table}[t]
\centering
\caption{\textbf{Complementarity of biological Fisher summaries.} Auxiliary area-cohort pair-classification analysis using cross-validated AUC. The baseline uses activation, decoder-profile, and inter-experiment mapping features. Adding trace-normalized shape-only Fisher gives the largest gain, suggesting that allocation contains information not fully captured by standard activation and mapping baselines. }
\label{tab:biology_incremental}
\begin{tabular}{lr}
\toprule
Feature set & Area CV AUC \\
\midrule
Baseline activation / decoder / mapping features & 0.589 \\
Baseline + full Fisher & 0.600 \\
Baseline + shape-only Fisher & 0.646 \\
Baseline + naive full & 0.588 \\
Baseline + naive shape-only & 0.632 \\
\bottomrule
\end{tabular}
\end{table}

\FloatBarrier
\clearpage

\end{document}